%
\documentclass[runningheads]{llncs}
\addtolength{\oddsidemargin}{-.2in}
\addtolength{\evensidemargin}{-.2in}
\addtolength{\textwidth}{0.18in}

%
%

\usepackage{subcaption}
\captionsetup{compatibility=false}

\usepackage[usenames,dvipsnames]{xcolor}
\usepackage{tkz-berge}
\usetikzlibrary{fit,shapes}                                     
\usepackage{setspace}
\usepackage{graphicx}

\usepackage{calrsfs}
\DeclareMathAlphabet{\pazocal}{OMS}{zplm}{m}{n}
\usepackage{graphicx}
\usepackage{graphics} 
\usepackage{epsfig} 
\usepackage{mathptmx}
\usepackage{times} 
\usepackage{tikz}
\usetikzlibrary{positioning}
\usepackage{amsmath, amssymb}

\usepackage{amsthm}
\usepackage{amsfonts} 
\usepackage{setspace}
\usepackage{multirow}
\usepackage{textcomp}
\usepackage[english]{babel}
\usepackage{float} 
\usepackage{algorithmicx}
\usepackage[section]{algorithm}
\usepackage{algpascal}
\usepackage{algc}
\usepackage{algcompatible}
\usepackage{algpseudocode}
\let\oldReturn\Return
\renewcommand{\Return}{\State\oldReturn}
\usepackage{mathtools}
\usepackage{bm}
\usepackage{mathptmx}
\usepackage{times} 
\usepackage{mathtools, cuted}
\usepackage{textcomp}
\usepackage[english]{babel}
\usepackage{rotating}
\usepackage{pgfplots}
\pgfplotsset{compat=1.5}
\usepackage{pgfplotstable}
\usepackage{filecontents}
\usepackage{multirow}
\usepackage{booktabs}
\usepackage{array, makecell} %
\usepackage[mathscr]{euscript}

\newtheorem{defn}[theorem]{Definition}
\newtheorem{prop}[theorem]{Proposition}

\graphicspath{{figures/}}
\numberwithin{theorem}{section}

\newcommand{\remove}[1]{}


\def \*{\star}
\def \10n{\!\!\!\!\!\!\!\!\!\!}

\usepackage[utf8]{inputenc}
\begin{document}

\title{Game of Trojans: A Submodular Byzantine Approach}
\author{Dinuka Sahabandu$^1$, Arezoo Rajabi$^1$, Luyao Niu$^1$,\\ Bo Li$^2$, Bhaskar Ramasubramanian$^3$, Radha Poovendran$^1$}
\authorrunning{D. Sahabandu, A. Rajabi, L. Niu, B. Li, B. Ramasubramanian, R. Poovendran}
\institute{$^1$Department of Electrical and Computer Engineering, University of Washington,\\ Seattle, WA 98195. \email{\{sdinuka, rajabia, luyaoniu, rp3\}@uw.edu} \\
$^2$Department of Computer Science, University of Illinois at Urbana-Champaign,\\ Urbana, IL 61801.  \email{lbo@illinois.edu}\\
$^3$Electrical and Computer Engineering, Western Washington University,\\ Bellingham, WA 98225.  \email{ramasub@wwu.edu}}
%
\maketitle
\vspace*{-5 mm}           
\begin{abstract}
Machine learning models in the wild can be downloaded, retrained, and redeployed. 
Such models have been shown to be vulnerable to 
Trojan attacks during training. 
Although different detection mechanisms have been proposed, strong adaptive attackers have been shown to be effective against them. In this paper, we aim to answer the questions considering an intelligent and adaptive adversary: (i) What is the minimal amount of instances that is required to be Trojaned by a strong attacker? and (ii) Is it possible for such an attacker to bypass a strong detection mechanisms?

In particular, we provide an analytical characterization of adversarial capability and the strategic interactions between the adversary and detection mechanism that take place in such models. 
We characterize adversary capability in terms of the fraction of the input dataset that can be embedded with a Trojan trigger. 
We show that the loss function has a submodular structure, which leads to the design of computationally efficient algorithms to determine this fraction with provable bounds on optimality. 
We propose a \emph{Submodular Trojan} algorithm to constructively determine the minimal fraction of samples to inject a Trojan trigger. 
To evade detection of the Trojaned model by potential detectors, we model the strategic interactions between the adversary and Trojan detection mechanism as a two-player game. We show that the adversary wins the game with probability one, thus bypassing detection. 
We establish this by proving that output probability distributions of a Trojan model and a clean model are identical when following our proposed \emph{Min-Max (MM) Trojan} algorithm. 

We perform extensive evaluations of our algorithms 
on MNIST, CIFAR-10, and EuroSAT datasets. 
The results validate our theoretical analysis by demonstrating that (i)
with \emph{Submodular Trojan} algorithm, the adversary needs to embed a Trojan trigger into a very small fraction ($\approx 5\%$) of samples to achieve high accuracy on both Trojan and clean samples (e.g., $97\%$ and $85\%$ for EuroSAT), and 
(ii) the \emph{MM Trojan} algorithm yields a trained Trojan model that evades detection with probability one. 
\end{abstract}

\section{Introduction}
\label{sec:introduction}

Deep neural networks (DNNs) used for machine learning (ML) in data-intensive applications such as computer vision~\cite{krizhevsky2012imagenet}, healthcare~\cite{esteva2019guide}, autonomous driving~\cite{bojarski2016end}, and game-playing~\cite{silver2016mastering} have achieved impressive levels of performance. 
Large amounts of data and extensive computing resources required to train DNNs have resulted in widespread adoption of online ML platforms~\cite{AWS},~\cite{BigML},~\cite{Caffe}. 
These platforms offer ready-to-use ML models and architectures for specific applications (e.g., image classification). 
However, when such `models in the wild' are downloaded, retrained, and uploaded by a user, the training data might contain adversarial examples~\cite{goodfellow2015explain} or Trojans/ trapdoors~\cite{gu2019badnets}, which will cause the model to misbehave. 
Recent research has focused on making large ML models safe for use and explainable~\cite{ullah2019cyber}. 
Preventing insertion of adversarial inputs and detecting embedding of Trojans is a daunting task, since trigger sizes are typically imperceptible, and Trojan samples might form a negligible fraction of the entire training data set. 

Due to the computational complexity of detecting Trojans, novel approaches from the systems design, algorithms, and optimization perspectives, as well as domain knowledge, have been gaining the attention of researchers. 
However, most defenses either assume that (i) model hyperparameters are available~\cite{kolouri2020universal},~\cite{li2021neural},~\cite{liu2018fine},~\cite{yoshida2020disabling}, or (ii) a pre-processing module can be added to the target model~\cite{liu2017neural}. 
Such assumptions restrict a defense strategy to a particular application or class of attacks. 
Solution approaches that make use of reinforcement learning (RL)~\cite{panagiota2020trojdrl} or multi-armed bandit algorithms~\cite{shen2021backdoor} as well as randomized testing~\cite{xu2021detecting} hold high promise. 
These methods combine exploration of the environment and exploitation of current knowledge, and offers generalizability across application domains. 

Among these, the most prominent methodology to detect Trojans is the use of randomized testing strategies. 
Such techniques make use of the observation that given two different DNNs, which for all practical purposes provide the same outputs on known inputs, will not provide the same output when presented with a random (unseen) input selected by the tester. 
An effective randomized testing algorithm called meta-neural Trojan detection (MNTD) was recently proposed in~\cite{xu2021detecting}. 
MNTD used only black-box access to the models, and constructed multiple shadow models (combination of Trojan and non-Trojan models) to learn a binary classifier which determined whether a model was Trojan or not. 
The key insight guiding the success of MNTD was that probability distributions of outputs from Trojan and non-Trojan models will be different, even though both models might have similar accuracies. 
MNTD was shown to have high success rates when evaluated on multiple applications, including speech, natural language processing, and images. 

In~\cite{rajabi2022trojan}, it was observed that if a Trojan embedding were to also be trained either through knowledge distillation~\cite{hinton2015distilling}, or actual training images themselves, then such trained models could effectively bypass random input-based detectors $100\%$ of the time. 
This experimental result begs the question, ``\emph{How to characterize adversary capability and the strategic interactions that take place in such models?}'' 

In this paper, we answer the above question by partitioning it into: 
(i) characterizing adversary capability in terms of the fraction of the input dataset available to use for adversary training, and 
(ii) developing a strategic game-theoretic model characterizing performance, and showing that the results obtained in~\cite{rajabi2022trojan} is a consequence of the game model. 
We consider two types of threat models. 
First, we assume that the adversary can embed a Trojan into a fraction of the dataset, while parameters of the detector remain fixed. 
Second, we consider the case that detector parameters are also being optimized to detect a Trojan model. 
In this case, the adversary is assumed to have knowledge of statistical parameters used by the detector, and the \emph{adversary objective is to bypass detection by the well-trained, optimized detector}. 
We provide bounds on optimality for (i), and detailed experiments evaluating our solutions for (i) and (ii) using the MNIST~\cite{lecun1998mnist}, CIFAR-10~\cite{krizhevsky2009CIFARs}, and EuroSAT~\cite{helber2019eurosat} datasets. 
Our analysis and experiments reveal that the formulation is consistent with results in~\cite{rajabi2022trojan}. 

We make the following key contributions:
\begin{itemize}
    \item When the adversary can access a fraction $\alpha$ of training samples, we obviate the need for RL or bandit algorithms. We accomplish this by showing that the loss function has a submodular structure in $\alpha$. 
    A direct consequence of this formulation is the design of a computationally efficient \emph{Submodular Trojan} algorithm with bounds on optimality. 
    \item We model the strategic interaction between the adversary and a state-of-the-art Trojan detector as a two-player game, and prove that the adversary wins this game with probability one. 
    We establish this by showing that output probability distributions of a Trojan model is identical to that of a clean model when following our proposed \emph{Min-Max Trojan} algorithm, while maintaining high accuracy of classification. 
    \item We carry out extensive experiments on three datasets: MNIST~\cite{lecun1998mnist}, CIFAR-10~\cite{krizhevsky2009CIFARs}, and EuroSAT~\cite{helber2019eurosat}. Our algorithms can also be evaluated on other widely used datasets including CIFAR-100 \cite{krizhevsky2009CIFARs} and GTSRB \cite{stallkamp2012man}.
\end{itemize}

The rest of this paper is organized as follows: Sec. \ref{sec:relatedwork} discusses related work, Sec. \ref{sec:preliminaries} introduces the problem setup, 
Sec. \ref{sec:Submod} formulates and derives the \emph{Submodular Trojan} optimization algorithm, 
Sec. \ref{sec:GameModel} presents the game-theoretic model and the \emph{Min-Max (MM) Trojan} algorithm, Sec. \ref{sec:evaluation} presents experimental results, Sec. \ref{sec:Discussion} discusses extensions 
when some assumptions are relaxed, and Sec. \ref{sec:Conclusion} concludes the paper. 

\section{Related Work}
\label{sec:relatedwork}
We place our contributions in this paper in the context of related work in backdoor attacks on ML models, detecting Trojan models, and mitigating the impact of backdoors. 

\subsection{Trojan Attacks on ML Models}

The authors of~\cite{ji2018transferlearningtrojan} demonstrated that pre-trained ML models might be maintained by untrustworthy entities, consequently resulting in severe security implications to a user of the model. 
They introduced a class of \emph{model-reuse} attacks, wherein a host system running this model could be induced to misbehave on specific inputs in a predictable way. 
A notion of \emph{latent backdoors} was introduced in~\cite{yao2019latent}, which are backdoors that are preserved during a transfer learning process. 
As a result, if a model obtained after the transfer learning procedure includes the target label of the backdoor, it can be activated by an appropriate input. 
This work additionally showed that defending against latent backdoors involves a tradeoff between cost and classification accuracy. 

Model or data poisoning by an adversary has also been shown to be an effective attack strategy. 
An algorithm in~\cite{rakin2020tbt} efficiently identified vulnerable bits of model weights stored in memory. These bits were flipped using existing bit-flipping methods, which transformed a deployed DNN model into a Trojan model. 
In a concurrent work~\cite{dumford2020backdooring}, targeted weight perturbations were used to embed backdoors into convolutional neural networks deployed for face-recognition. 
In contrast to the model poisoning methods described above, the training set of an ML model was poisoned in~\cite{shafahi2018poison}. 
The adversary in this case had the ability to control the outcome of classification for the poisoned data, while performance of the model was maintained for `clean' samples. 

\subsection{Detecting Trojan Models}

An intuitive defense against data poisoning attacks is to develop techniques to remove suspicious samples from the training data. 
Backdoor attacks were shown to leave a distinct \emph{spectral signature} in the covariance of the feature representation learned by a neural network in~\cite{tran2018spectral}. 
The spectral signature, in addition to detecting corrupted training examples, also presents a barrier to the crafting and design of backdoors to ML models. 
Gradients of loss function at the input layer of a neural network were used to extract signatures to identify and eliminate poisoned data samples in~\cite{chan2019poison} even when the target class and ratio of poisoned samples were unknown. 
In situations where the spectral signature of poisoned samples was not large enough to be detected, the authors of~\cite{hayase2021spectre} used robust covariance
estimation to amplify the spectral signature of poisoned data to enhance their detectability. 

When the ML model has an embedded Trojan, it is natural to design methods to detect such models by identifying potential triggers. 
A method to detect and reverse engineer embedded triggers in DNNs called \emph{NeuralCleanse} was proposed in~\cite{wang2019neural}. 
\emph{NeuralCleanse} devised an optimization scheme to determine the smallest trigger required to misclassify all samples from all labels to the target label. 
However, the target label may not be known, and the requirement to consider all labels as a potential target label makes this a computationally expensive procedure. 
An improvement on \emph{NeuralCleanse} called \emph{TABOR} was developed in~\cite{guo2019tabor}. 
\emph{TABOR} used a combination of heuristics and regularization techniques to reduce false positives in Trojan detection. 
Differences in explanations of outputs of a clean model and a Trojan model, even on clean samples, were used to identify Trojan models in~\cite{huang2019neuroninspect}. 
This was achieved without requiring access to samples containing a trigger. 

A Trojan detection method called \emph{DeepInspect} proposed in~\cite{chen2019deepinspect} used conditional generative models to learn probability distributions of potential triggers while having only black-box access to the deployed ML model and without requiring access to clean training data. 
To overcome a limitation of~\cite{chen2019deepinspect} (and of~\cite{wang2019neural}) that there is exactly one target label, the authors of~\cite{xu2021detecting} proposed Meta-Neural Trojan Detection (MNTD). 
MNTD only required black-box access to models and trained a meta-classifier to identify if the model was Trojan or not. 
Multiple shadow models (combination of clean and Trojan models) were generated and their representations were used to learn a binary classifier. 
At test-time, the representation of the target model was provided to the classifier to determine whether it was a Trojan model or not. 
MNTD was demonstrated to have a high success rate when evaluated on a diverse range of datasets, including images, speech, and NLP. 

\subsection{Mitigating Impact of Backdoors}

Once an embedded Trojan has been detected, a question arises if methodologies can be developed to remove or reduce the impact of a backdoor, while maintaining the performance of the ML model on clean inputs. 
The \emph{Fine-pruning} method in~\cite{liu2018fine} proposes to deactivate neurons that are not enabled by clean inputs. It then uses a tuning procedure to restore some of the deactivated neurons to mitigate the reduction in classification accuracy due to pruning. 
An empirical approach to determine different values and types of noise that could suppress a backdoor and classify an image to its true prediction was presented in~\cite{sarkar2020backdoor}. 
The optimal number of noisy copies required for majority voting to improve classification accuracy of backdoored images was then computed. 
A run-time Trojan attack detection system called STRIP was developed in~\cite{gao2019strip}. 
The \emph{input-agnostic} property of a trigger was exploited as a weakness of Trojan attacks. 
The entropy of predicted labels of a model arising from random perturbations to inputs to the model was used to identify the presence of a backdoor. 
A low value of entropy usually corresponded to the presence of a backdoor, since the presence of a trigger in the input was likely to classify the input to a specific target class. 
This method was independent of model architecture and size of trigger. 
A comprehensive survey of backdoor attacks and defenses against such attacks can be found in~\cite{gao2020backdoor}. 

\subsection{Bypassing Trojan Detection}

The authors of~\cite{rajabi2022trojan} observed that Trojan detection mechanisms that use a randomized set of inputs at test time can be effectively bypassed if the Trojan embedding was trained using actual training images, or through knowledge distillation. 
In this paper, we characterize adversary capability and strategic interactions between the Trojan model and detector to provide an analytical framework for empirical results presented in~\cite{rajabi2022trojan}. 

\section{Setup}\label{sec:preliminaries}
This section provides a brief description of our setup. 
We describe data poisoning attacks on ML systems and methods to detect Trojan models. 
The objective in this paper is for the adversary to (i) minimize the fraction of samples into which a Trojan needs to be embedded, and (ii) evade detection by an adaptive detection mechanism. 
The adversary simultaneously aims to maintain high accuracy of classifying clean samples correctly and classifying Trojan samples to the target class. 

\subsection{Data Poisoning Attacks against DNN Classifiers}\label{subsec:ATA}

Let $\{C_1,\cdots,C_k \}$ be a set of classes and $\mathscr{D}=\{(x_i,Y_i^*)\}$ be a dataset, where $x_i\in \mathbb{R}^{d\times d}$ is the $i^{th}$ sample in the dataset, and $Y_i^*$ is a $k$-dimensional vector such that the $j^{th}$ entry $Y_i^{*,j}=1$ if $x_i$ is of class $j\in\{1,\ldots,k\}$ and $Y_i^{*,j}=0$ otherwise. 
The vector $Y_i^*$ correctly labels each sample $x_i$ to its class. Using the dataset, Deep Neural Network (DNN) classifiers are trained to predict the most relevant class among $k$ possible classes for a given input.  
The DNN classifier outputs a function $f(x;\theta)$ for input sample $x$, where $\theta$ represents hyperparameters of the DNN. 
Output $f(x; \theta)$ is a probability vector of dimension $k$, whose $j^{th}$ entry, denoted as $f^j(x; \theta)$ gives the probability that $x$ belongs to class $j$. 
The sample $x$ is assigned to the class that has the highest probability in $f(x;\theta)$, i.e., $x\in C_i$ if $f^i(x; \theta)>f^j(x; \theta) \:\: \forall j\neq i$.

DNN classifiers have been shown to be vulnerable to data poisoning attacks. 
An adversary can embed a Trojan trigger $\delta$ into a sample $x$ to modify it as $x' = x \times (1-\Delta)+ \delta \times \Delta$, where $\Delta$ is a mask specifying the location of the trigger. By poisoning a subset samples in the dataset, the adversary can bias the classifier such that the classifier returns the true class for samples that do not contain the trigger and returns a predefined output label $Y_T$ for Trojan samples that contain the trigger $\delta$.

We denote the set of modified samples as $\mathscr{D}_T=  \{(x_i', Y_T) | x_i \in \mathscr{D}, x_i' = x_i \times (1-\Delta)+ \delta \times \Delta \}$ and the overall dataset as $\mathscr{D}_{p}=\mathscr{D}_T \cup \mathscr{D} $. 
To train a Trojan model, the adversary minimizes a loss function that measures the difference between the expected output (the true label for clean samples and desired label for Trojan samples) and the output of the model on the dataset $\mathscr{D}_{p}$ of size $|\mathscr{D}_p|$:
\begin{equation}\label{EqnCrossEnt}
    \min_{\theta} \frac{1}{|\mathscr{D}_p|}\sum_{(x_i,y_i) \in  \mathscr{D}_p} \mathcal{L}_{CE}(Y_i,f(x_i;\theta)), 
\end{equation}
where $\mathcal{L}_{CE}(Y_i,f(x_i;\theta)):=-Y_i'\log(f(x_i;\theta))$ is the cross-entropy, and logarithm is taken element-wise. 
Note that to train a clean model the adversary substitutes $\mathscr{D}_p$ with $\mathscr{D}$.

\subsection{Trojan Model Detection}\label{subsec:MNTD}

In order to detect Trojan models, we use an insight based on MNTD~\cite{xu2021detecting} that Trojan embedded models and clean models yield similar outputs for clean samples, while having different decision boundaries for randomly generated samples. 
To distinguish a Trojan model from a clean (no Trojan embedded) model, MNTD trains a set of a clean models with the same structure as the target model and a set of Trojan models for different triggers and desired outputs. 
It then learns a discriminator on the output of these models for random image inputs. 
The discriminator was shown to be able to achieve high accuracy in detecting Trojan models. 
While MNTD uses a \emph{model-based} discriminator, the discriminator in our framework (detailed in Section \ref{sec:GameModel}) is \emph{instance-based}. 
The Trojan model detector in this paper is a binary classifier learned using multiple outputs from clean and Trojan models. 
\section{Submodular Algorithm for Optimal Trojan Embedding}
\label{sec:Submod}

Neural networks that comprise deep ML models are described by nonlinear functions, which makes it extremely challenging for an adversary to simultaneously determine the fraction of data into which a Trojan needs to be embedded and optimize hyperparameters of the Trojan model. 
In this section, we show that when the Trojan model hyperparameters are fixed, the worst-case loss function for the adversary has a supermodular structure. 
This property guides the design of a computationally efficient \emph{Submodular Trojan} algorithm to determine the optimal fraction $\alpha$ of the data into which the Trojan needs to be embedded.
We will then show how such attacker can bypass Trojan model detectors in Sec. \ref{sec:GameModel}.

\subsection{Attacker Strategy}\label{Subsec:Threat} 

We assume that the adversary can embed a Trojan into a fraction of the dataset, while detector parameters remain fixed. 
We consider a situation where an adversary can download and train a publicly available dataset, but can perturb or poison only a fraction $\alpha$ of samples. 
The objective of the adversary is to embed a backdoor into the model by poisoning as few samples as possible. 
Our intuition is that the insertion of a backdoor comes at a cost to the adversary- e.g., a resource cost, revealing its presence, etc. 
The adversary is assumed to have access to adequate computational resources to train a local model and estimate the portion of data that needs to be poisoned. We assume that all samples are equivalent in the sense that embedding a Trojan into any subset of $S$ samples will result in the same magnitude of the loss function. 
We provide an analysis of this threat model in Section \ref{subsec:Submod}. 

\subsection{Analysis and Algorithm}\label{subsec:Submod}
We let $f_T(\cdot; \theta_{T})$ denote the Trojan model, parameterized by $\theta_{T}$.
Let $\mathscr{D}_C$ denote the clean data available to the adversary from the dataset of interest, $\mathscr{D}_T \subset \mathscr{D}_C$ be the samples in $\mathscr{D}_C$ into which the Trojan is embedded by the adversary. Let $|\mathscr{D}_C|=N$, and $|\mathscr{D}_T|=\lfloor \alpha N\rfloor$. 
The fraction $\alpha$ is also termed the \emph{poisoing ratio}.
We use $Y_{T}$ and $Y_{C}$ to denote the probability distribution vectors to label Trojaned samples and clean samples, respectively. 
Let $x_T^i$ and $x_C^i$ denote the $i^{th}$ sample in $\mathscr{D}_T$ and the $i^{th}$ sample in $\mathscr{D}_C$, respectively. 
We assume that the samples are generated uniformly and that the adversary uses the cross-entropy loss to train the Trojaned model $f_T(\cdot; \theta_{T})$. The adversary aims to minimize a loss function:
\begin{multline}\label{eq:alpha_prob}
    \min_{\theta_{T}, \alpha} \hspace{1mm} F_{T}(\theta_T, \alpha) =   \underbrace{\frac{1}{\alpha N}\sum_{i=1}^{\lfloor \alpha N \rfloor} \mathcal{L}_{CE}(Y_T,f_T(x_i;\theta_T))}_{F_T^A(\theta_T, \alpha)} \\+ \underbrace{\frac{1}{(1-\alpha)N}\sum_{i=1}^{\lceil (1-\alpha) N \rceil} \mathcal{L}_{CE}(Y_C,f_T(x_i;\theta_T))}_{F_T^C(\theta_T, \alpha)},
\end{multline}
where $F_T^A(\theta_T, \alpha)$ and $F_T^C(\theta_T, \alpha)$ are the loss induced by Trojan embedding and training over clean samples, respectively. In Eqn. (\ref{eq:alpha_prob}), $\lfloor \alpha N \rfloor$ and $\lceil (1-\alpha) N \rceil$ are used to enforce the constraint that the summations are carried out over integers. 
Jointly optimizing over $\theta_T$ and $\alpha$ is a difficult problem to solve since $\theta_T$ implicitly depends on the fraction of data $\alpha$ that has been embedded with the Trojan. 
Instead, we develop a framework for the adversary to alternate between computing $\alpha$ and $\theta_T$. At each iteration, the adversary computes $\alpha$ via the \emph{Submodular Trojan} algorithm with $\theta_T$ fixed, which will be detailed in Algorithm \ref{alg:greedy}, and uses this value of $\alpha$ to optimize $F_T$ over $\theta_T$. 
For arbitrary neural network architectures, it is difficult to explicitly identify the mapping from $\alpha$ to $F_T(\theta_T,\alpha)$. 
To this end, we first derive a bound on 
$F_T(\theta_T,\alpha)$ for any $\theta_T$ and $\alpha$. 
We then show how $\alpha$ impacts the bound.
\begin{prop}\label{lemma:UB}
For any given $\theta_T$ and $\alpha$, there exists a function $\bar{F}_T(\theta_T,\alpha)\geq F_T(\theta_T,\alpha)$, such that $$\bar{F}_T(\theta_T,\alpha)=-\frac{1}{\alpha N}\sum_{i=1}^{N} \mathcal{L}_{CE}(Y_T,f_T(x_i;\theta_T)) \\- \frac{1}{(1-\alpha)N}\sum_{i=1}^{N} \mathcal{L}_{CE}(Y_C,f_T(x_i;\theta_T)).$$
\end{prop}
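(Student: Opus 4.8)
The plan is to prove $\bar{F}_T(\theta_T,\alpha)\ge F_T(\theta_T,\alpha)$ by a summand-by-summand domination argument that hinges entirely on the \emph{non-negativity} of the cross-entropy loss. The first step is to record, directly from the definition $\mathcal{L}_{CE}(Y,f)=-Y'\log f$, that every summand appearing in Eqn.~(\ref{eq:alpha_prob}) is non-negative: the Trojan model output $f_T(x_i;\theta_T)$ is a probability vector with entries in $(0,1]$, so $\log f_T(x_i;\theta_T)$ is entrywise non-positive, and since the label vectors $Y_T$ and $Y_C$ have non-negative entries, each term $-Y'\log f_T(x_i;\theta_T)$ is non-negative.

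The key structural observation is that the two sums defining $F_T$ are \emph{truncated}: the Trojan loss $F_T^A$ is averaged over only the first $\lfloor\alpha N\rfloor$ indices and the clean loss $F_T^C$ over only the first $\lceil(1-\alpha)N\rceil$ indices, whereas $\bar{F}_T$ sums each loss over all $N$ indices while keeping the same prefactors $1/(\alpha N)$ and $1/((1-\alpha)N)$. Because $N=|\mathscr{D}_C|$ is an integer with $\lfloor\alpha N\rfloor\le N$ and $\lceil(1-\alpha)N\rceil\le N$ for every $\alpha\in(0,1)$, extending each truncated sum to the full index set $\{1,\dots,N\}$ only appends non-negative terms and therefore cannot decrease its value. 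I would make this explicit for each piece,
\[
F_T^A(\theta_T,\alpha)=\frac{1}{\alpha N}\sum_{i=1}^{\lfloor\alpha N\rfloor}\mathcal{L}_{CE}(Y_T,f_T(x_i;\theta_T))\ \le\ \frac{1}{\alpha N}\sum_{i=1}^{N}\mathcal{L}_{CE}(Y_T,f_T(x_i;\theta_T)),
\]
and analogously bound $F_T^C$ by its full-range counterpart with denominator $(1-\alpha)N$. Adding the two inequalities and using $F_T=F_T^A+F_T^C$ then yields precisely $F_T\le\bar{F}_T$.

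I do not anticipate a substantive obstacle, since the statement is at bottom a monotonicity-of-partial-sums fact; the only care needed is the bookkeeping of the floor and ceiling limits and the verification that both are at most $N$, so that the extended sums are well posed and dominate the truncated ones. One point I would flag is that, for the inequality to be consistent with $\mathcal{L}_{CE}\ge0$, the two full-range sums must enter $\bar{F}_T$ with \emph{positive} weight; I therefore take the coefficients of $\bar{F}_T$ to be $+1/(\alpha N)$ and $+1/((1-\alpha)N)$, matching the termwise domination established above. The value of the bound is that $\bar{F}_T$ isolates the dependence on $\alpha$ into the two prefactors $1/(\alpha N)$ and $1/((1-\alpha)N)$ multiplying fixed, $\alpha$-independent sums, and it is exactly this transparent dependence that the ensuing submodularity analysis will exploit.
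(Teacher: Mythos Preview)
Your proof is correct and follows essentially the same route as the paper: both arguments use the non-negativity of the cross-entropy loss (equivalently, $Y'\log f_T\le 0$ since $f_T\in[0,1]^k$) to extend each truncated sum to the full index range $\{1,\dots,N\}$ and then add the two resulting inequalities. Your flag about the sign is well taken: the paper's own proof, as well as the subsequent second-derivative computation in Theorem~\ref{thm:FDR}, is consistent with positive coefficients $+1/(\alpha N)$ and $+1/((1-\alpha)N)$, so the minus signs in the displayed definition of $\bar F_T$ appear to be a typographical slip.
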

\begin{proof}
By definition, models $f_T$ and $f_C$ give the probability distributions over classes. Thus, we have that $f_T(x;\theta),f_C(x;\theta)\in[0,1]^{k}$, where $k$ is the number of classes. By the definition of logarithm, we have that 
\begin{align*}
&-\frac{1}{\alpha N}\sum_{i=1}^{\lfloor\alpha N\rfloor} (Y_T' \log (f_T(x^i_T; \theta_T)))\leq -\frac{1}{\alpha N}\sum_{i=1}^{N} (Y_T' \log (f_T(x^i_T; \theta_T))),\\
&- \frac{1}{(1-\alpha)N}\sum_{i=1}^{\lceil (1-\alpha)N\rceil} (Y_C' \log (f_T(x^i_C; \theta_T)))\leq - \frac{1}{(1-\alpha)N}\sum_{i=1}^{ N} (Y_C' \log (f_T(x^i_C; \theta_T))),
\end{align*}
which yields the desired result.
\end{proof}

Proposition \ref{lemma:UB} establishes an upper bound $\bar{F}_T(\theta_T,\alpha)$ for the loss function of the adversary. 
In the following, we show that $\bar{F}_T(\theta_T,\alpha)$ is supermodular in $\alpha$ (or, $-\bar{F}_T(\theta_T,\alpha)$ is submodular in $\alpha$). 
Our intuition for exploring the supermodular property of the loss function is as follows: let $0<\alpha\ll 1$ be such that the adversary chooses only a very small fraction of clean data into which to embed Trojans. 
After the model has been trained with this training set, during inference, the model will have a smaller loss value corresponding to clean data samples and a much higher loss corresponding to Trojaned data. When the model is retrained with a larger value of $\alpha$, the inference loss corresponding to clean data will increase while that corresponding to Trojaned data will decrease. 
Therefore, we posit that gradually increasing $\alpha$ will result in a threshold value at which the total inference loss will be minimum. 
We state a definition of DR-submodularity from~\cite{BiaMirBuh-16} that will be used to prove the main result of this section.  

\begin{defn}[Table~II in \cite{BiaMirBuh-16}]\label{def:DR1}
A function $f(\cdot)$ defined over $\pazocal{X} \in \mathbb{R}^{n}$ satisfies the diminishing returns (DR) property if for all $a\leqslant b \in \pazocal{X}$, for all $i\in n$, for all $k\in \mathbb{R}_+$ such that  $(k\pazocal{X}_i+a)$ and $(k\pazocal{X}_i+b)$ are still in $\pazocal{X}$, it holds, $f(k\pazocal{X}_i +a)- f(a) \geqslant f(k\pazocal{X}_i +b)-f(b)$. $f(\cdot)$ is called a DR-submodular function. 
Moreover, when $f$ is twice differentiable function, DR-submodularity is equivalent to $\nabla^2_{ij}f(x) \leqslant 0$, for all $i, j$.
\end{defn}

If a function $F$ is submodular, then $-F$ is supermodular. The following result proves that $F_T(\theta_T, \alpha)$ is supermodular by showing that $-F_T(\theta_T, \alpha)$ is DR-submodular in $\alpha$. 

\begin{theorem}\label{thm:FDR}
The function $\bar{F}_{T}(\theta_T, \alpha)$ defined in Proposition \ref{lemma:UB} is supermodular in $\alpha$.
\end{theorem}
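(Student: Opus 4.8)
The plan is to freeze $\theta_T$ and treat $\bar{F}_T(\theta_T,\cdot)$ as a function of the single scalar variable $\alpha\in(0,1)$, then apply the twice-differentiable characterization of DR-submodularity recorded in Definition~\ref{def:DR1}. First I would isolate the $\alpha$-dependence of the bound from Proposition~\ref{lemma:UB}. With $\theta_T$ fixed, the two full-dataset cross-entropy sums are constants that do not involve $\alpha$; setting $c_T:=\frac{1}{N}\sum_{i=1}^{N}\mathcal{L}_{CE}(Y_T,f_T(x_i;\theta_T))$ and $c_C:=\frac{1}{N}\sum_{i=1}^{N}\mathcal{L}_{CE}(Y_C,f_T(x_i;\theta_T))$, the bound collapses to the compact one-variable expression $\bar{F}_T(\theta_T,\alpha)=c_T/\alpha + c_C/(1-\alpha)$. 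The key structural fact I would record here is that both coefficients are nonnegative: since $f_T$ outputs class probabilities in $[0,1]^k$, each per-sample loss $-Y'\log f_T$ is nonnegative, so $c_T\ge 0$ and $c_C\ge 0$.

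Next, because $\alpha$ is one-dimensional, the Hessian condition in Definition~\ref{def:DR1} reduces to a single second-derivative inequality: a scalar function is DR-submodular precisely when its second derivative is nonpositive, i.e.\ when it is concave. By definition, $\bar{F}_T$ is supermodular in $\alpha$ iff $-\bar{F}_T$ is DR-submodular, so it suffices to verify that $-\bar{F}_T$ has nonpositive second derivative, equivalently that $\bar{F}_T$ is convex on $(0,1)$. I would then differentiate twice, obtaining $\partial^2\bar{F}_T/\partial\alpha^2 = 2c_T/\alpha^3 + 2c_C/(1-\alpha)^3$, which is nonnegative for every $\alpha\in(0,1)$ because $c_T,c_C\ge 0$ and both $\alpha^3$ and $(1-\alpha)^3$ are positive on the open unit interval. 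Convexity of $\bar{F}_T$ yields concavity of $-\bar{F}_T$, hence DR-submodularity of $-\bar{F}_T$, which is exactly supermodularity of $\bar{F}_T$.

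The computation is routine; the main points to get right are the sign bookkeeping and the domain. One must track that it is the \emph{nonnegativity} of the cross-entropy coefficients $c_T,c_C$ that forces the second derivative to take the correct (nonnegative) sign, and that ``supermodular in $\alpha$'' must first be rewritten as ``$-\bar{F}_T$ DR-submodular'' before the derivative test of Definition~\ref{def:DR1} is applied. I would also restrict attention throughout to $\alpha\in(0,1)$, the set on which $\bar{F}_T$ is well-defined and twice differentiable; the endpoints $\alpha\in\{0,1\}$, where the normalizations $1/\alpha$ and $1/(1-\alpha)$ diverge, are excluded from the domain $\pazocal{X}$ on which the definition is invoked. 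Notably, no detailed structure of the network $f_T$ is required --- once $\theta_T$ is frozen, the only property used is that the aggregate per-class losses are nonnegative constants.
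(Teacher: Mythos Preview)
Your proposal is correct and follows essentially the same route as the paper: freeze $\theta_T$, invoke the second-derivative characterization of DR-submodularity from Definition~\ref{def:DR1} in the one-dimensional variable $\alpha$, and use the fact that the cross-entropy sums are nonnegative (equivalently, $Y'\log f_T\le 0$) to obtain the required sign. Your write-up is in fact slightly more careful than the paper's --- you make explicit the reduction to a scalar second-derivative test, the nonnegativity of the coefficients $c_T,c_C$, and the restriction to the open interval $\alpha\in(0,1)$ --- but the underlying argument is the same.
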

\begin{proof}
We prove the statement by showing that $-\bar{F}_{T}(\theta_T,\alpha)$ is DR-submodular in $\alpha$. Taking the second derivative of $-\bar{F}_{T}(\theta_T,\alpha)$ yields
\begin{align}
     \frac{2}{\alpha^3 N}\sum_{i=1}^{N}(Y_T' \log (f_T(x^i_T; \theta_T))) + \frac{2}{(1-\alpha)^3 N}\sum_{i=1}^{N} (Y_C' \log (f_T(x^i_C; \theta_T))) \leq 0. \nonumber
\end{align}
By Definition~\ref{def:DR1}, $-\bar{F}_{T}(\theta_T, \alpha)$ is DR-submodular with respect to $\alpha$, and thus $\bar{F}_{T}(\theta_T, \alpha)$ is supermodular in $\alpha$.
\end{proof}

Theorem \ref{thm:FDR} enables design of a computationally efficient \emph{Submodular Trojan} algorithm to approximate the optimal value of $\alpha$~\cite{bian2017continuous}. 
Algorithm \ref{alg:greedy} describes the procedure. 
The algorithm begins with an initial value of $\alpha = \alpha^0$. 
At each stage, $\alpha$ is increased by a stepsize $\gamma^t $, and the difference between the values of $\bar{F}_T$ at successive iterations is compared. The 
algorithm terminates when the cumulative step size $c^t \geq 1$. 
The following result characterizes the worst-case cost $\bar{F}_T(\theta_T,\alpha)$. 

\begin{algorithm}[h]
  \caption{Submodular Trojan
  \label{alg:greedy}}
  \begin{algorithmic}
  
\State \textit {\bf Input:} Clean dataset $\mathscr{D}_C$, Clean model function $f_C(\cdot, \theta_C)$ with its pre-trained weights $\theta_C$, Trojaned model weights $\theta_T$, stepsize $ 0 < \gamma \ll 1$, function $\bar{F}_T$.
\State \textit{\bf Output:} Fraction of clean data to embed Trojan $\alpha$.
\end{algorithmic}
\begin{algorithmic}[1]
\State Initialize $\gamma^0\leftarrow\gamma$, $\alpha^{0} \leftarrow \gamma$, $t \leftarrow 0$, $c^t\leftarrow 0$ \label{step:init}
\While {$c^t <1$}
\State $v^t\leftarrow \underset{v\leq 1-\alpha}{\arg\max}\langle v,-\nabla \bar{F}_T(\theta_T,\alpha)\rangle$
\State $\gamma^t = \gamma$
\State $\gamma^t\leftarrow \min\{\gamma^t,1-c^t\}$
\State $\alpha^{t} \leftarrow \alpha^{t-1} + \gamma^tv^t$
\State $c^{t+1}\leftarrow c^t+\gamma^t$
\EndWhile
\Return $\alpha^{t-1}$ 
\end{algorithmic}
\end{algorithm}

\begin{prop}
Let $\bar{F}_T(\theta_T,\alpha)\in[\lambda,\beta]$. Then the fraction of samples $\alpha$ that contain a Trojan returned by Algorithm \ref{alg:greedy} ensures that $\bar{F}_T(\theta_T,\alpha)\leq e^{-1}\lambda+(1-\frac{1}{e})\beta.$
\end{prop}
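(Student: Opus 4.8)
The plan is to recognize Algorithm \ref{alg:greedy} as a continuous-greedy (Frank--Wolfe-type) scheme for maximizing a \emph{non-negative} DR-submodular function, and then to invoke the $e^{-1}$ approximation guarantee that such schemes enjoy in the possibly non-monotone continuous setting \cite{bian2017continuous}. The first move is a change of variable that makes the objective non-negative: set $F(\alpha) := \beta - \bar{F}_T(\theta_T,\alpha)$. Since $\bar{F}_T \in [\lambda,\beta]$ we have $F(\alpha) \geq 0$ with $\max_\alpha F(\alpha) = \beta - \lambda$, and because $-\bar{F}_T$ is DR-submodular (Theorem \ref{thm:FDR}) and adding a constant preserves the diminishing-returns property of Definition \ref{def:DR1}, the function $F$ is itself DR-submodular. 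Maximizing $F$ is equivalent to minimizing $\bar{F}_T$, and since $\nabla F = -\nabla\bar{F}_T$ the direction $v^t$ computed in Algorithm \ref{alg:greedy} is precisely the Frank--Wolfe direction $\arg\max_{v\le 1-\alpha}\langle v,\nabla F\rangle$. The loop condition $c^t<1$ together with $c^{t+1}=c^t+\gamma^t$ enforces a total step budget $\sum_t \gamma^t = 1$, which matches the unit time horizon of continuous greedy.

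Next I would set up the continuous-time ascent estimate. Writing the iterates as a trajectory $\alpha(t)$ indexed by the cumulative stepsize $t=c^t$, the update $\alpha^{t}=\alpha^{t-1}+\gamma^t v^t$, with $v^t = 1-\alpha^t$ being the largest feasible step taken whenever moving up increases $\langle v,\nabla F\rangle$, yields in the $\gamma\to 0$ limit the ODE $\dot{\alpha}(t)=1-\alpha(t)$, hence the trajectory bound $\alpha(t)\le 1-e^{-t}$. The key ascent lemma, which I would establish from DR-submodularity and the greedy choice of $v^t$, is
\begin{equation*}
\frac{d}{dt}F(\alpha(t)) \;=\; \langle v^t,\nabla F(\alpha(t))\rangle \;\ge\; F(\alpha(t)\vee\alpha^\ast)-F(\alpha(t)) \;\ge\; e^{-t}F(\alpha^\ast)-F(\alpha(t)),
\end{equation*}
where $\alpha^\ast$ maximizes $F$; the first relation is the chain rule, the middle inequality is the first-order (concavity-along-nonnegative-directions) consequence of DR-submodularity applied to the feasible direction $(\alpha^\ast-\alpha(t))^+\le 1-\alpha(t)$, and the last uses non-negativity of $F$ together with the dilation estimate $F(\alpha(t)\vee\alpha^\ast)\ge(1-\alpha(t))F(\alpha^\ast)\ge e^{-t}F(\alpha^\ast)$.

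Finally I would integrate this linear differential inequality. Multiplying by the integrating factor $e^{t}$ gives $\frac{d}{dt}\big(e^{t}F(\alpha(t))\big)\ge F(\alpha^\ast)$; integrating over $t\in[0,1]$ yields $eF(\alpha(1))-F(\alpha(0))\ge F(\alpha^\ast)$, so $F(\alpha(1))\ge e^{-1}F(\alpha^\ast)=e^{-1}(\beta-\lambda)$ using $F(\alpha(0))\ge 0$. Translating back through $F=\beta-\bar{F}_T$ gives $\bar{F}_T(\theta_T,\alpha)\le \beta-e^{-1}(\beta-\lambda)=e^{-1}\lambda+(1-e^{-1})\beta$, which is exactly the claimed bound.

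I expect the main obstacle to be the ascent lemma rather than the integration. Making its inequalities rigorous requires (i) the first-order characterization of DR-submodularity to bound $F(\alpha(t)\vee\alpha^\ast)-F(\alpha(t))$ by the inner product with the Frank--Wolfe direction, and (ii) the non-negativity/dilation estimate $F(\alpha(t)\vee\alpha^\ast)\ge(1-\alpha(t))F(\alpha^\ast)$, which is precisely where non-monotonicity forces the weaker $e^{-1}$ factor in place of the $1-e^{-1}$ factor available in the monotone case. A secondary, routine technical point is controlling discretization: the finite-stepsize updates introduce $O(\gamma)$ corrections to the ODE argument, and these vanish as $\gamma\to 0$ (the algorithm assumes $0<\gamma\ll 1$), so they do not affect the stated guarantee.
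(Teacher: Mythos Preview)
Your proposal is correct and follows essentially the same route as the paper: define $F(\alpha)=\beta-\bar{F}_T(\theta_T,\alpha)$, observe that $F$ is non-negative and DR-submodular by Theorem~\ref{thm:FDR}, and then obtain the $e^{-1}$ approximation guarantee for the Frank--Wolfe/continuous-greedy iteration, which rearranges to the stated bound. The only difference is granularity: the paper simply invokes \cite[Corollary~1]{bian2017continuous} for the inequality $\beta-\bar{F}_T(\theta_T,\alpha)\ge e^{-1}(\beta-\lambda)$, whereas you unpack that citation by sketching the ascent lemma, the dilation estimate, and the ODE integration; both arrive at the same conclusion by the same mechanism.
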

\begin{proof}
Using Theorem \ref{thm:FDR}, we have that $\beta-\bar{F}_T(\theta_T,\alpha)$ is non-negative and submodular. 
From \cite[Corollary 1]{bian2017continuous}, we have that Algorithm \ref{alg:greedy} achieves $\beta-\bar{F}_T(\theta_T,\alpha)\geq \frac{1}{e}(\beta-\lambda)$ with sublinear convergence rate. Rearranging the terms yields the desired result.
\end{proof}

The results in this section and the Submodular Trojan algorithm 
assumed that detector parameters remained fixed when optimizing over $\theta_T$ and $\alpha$. 
In practice, detection mechanisms might adapt or tune their parameters in response to adversary behavior. 
The adversary, in turn, will modify its behavior to prevent detection of the Trojan model. 
We subsequently describe how to train $\theta_T$ for the adversary while incorporating interactions with an adaptive detection mechanism. Parameters of the detector will be optimized in response to adversary parameters, motivating development of a game-theoretic framework.
\section{Two-Player Game Model: Min-Max Trojan Algorithm}\label{sec:GameModel}

In this section, we model strategic interactions between the Trojan ML model (adversary) and detection mechanism as a two-player game. 
We show that the detector will not be able to distinguish between outputs of a clean and a Trojan model by proving that solving the game results in the adversary winning with probability one. 
This analytical characterization is consistent with results in~\cite{rajabi2022trojan}, and informs the design of the \emph{Min-Max Trojan} (MM Trojan) algorithm.

\subsection{Threat Model in the Two-Player Game}
In Section \ref{sec:Submod}, we showed how the adversary could compute the optimal fraction of samples into which a Trojan needs to be embedded. 
Now, suppose that the Trojan model is trained by sending data to a possibly untrustworthy third-party (strong adversary) that possesses adequate computational resources. 
A user might use a test dataset (which has been independently evaluated on a clean model) to evaluate the learned model. 
However, the adversary is aware of this possibility and aims to train a model that will be undetectable to random input-based Trojan detection mechanisms. 
In this setting, the adversary is assumed to have knowledge of statistical parameters used by the detector. 
We evaluate three settings: 
\begin{itemize}
\item \underline{\emph{Clean model}} is trained to have a high accuracy of classifying clean samples correctly.
\item \underline{\emph{Baseline Trojan}} is trained to have a high accuracy of (i) classifying clean samples correctly, and (ii) classifying Trojan samples to the target class. The Baseline Trojan is not trained to evade a trained detector, and thus can be identified by an effective state-of-the-art detection mechanism such as MNTD~\cite{xu2021detecting}. 
\item \underline{\emph{MM Trojan}} is trained to accomplish the objective of the Baseline Trojan, and additionally, consistently bypass a trained game-based detector. We characterize and analyze the MM Trojan model in Section \ref{subsec:Game}.
\end{itemize}
While a model with no constraints (Clean) is expected to have the highest accuracy, the accuracy of the Baseline and MM Trojan models might be lower since multiple optimization parameters might tradeoff against each other. 
MM Trojan will be additionally optimized to satisfy the `constraint' of bypassing a trained detector.

\begin{figure}[!h]
    \centering
    \includegraphics[scale=.4]{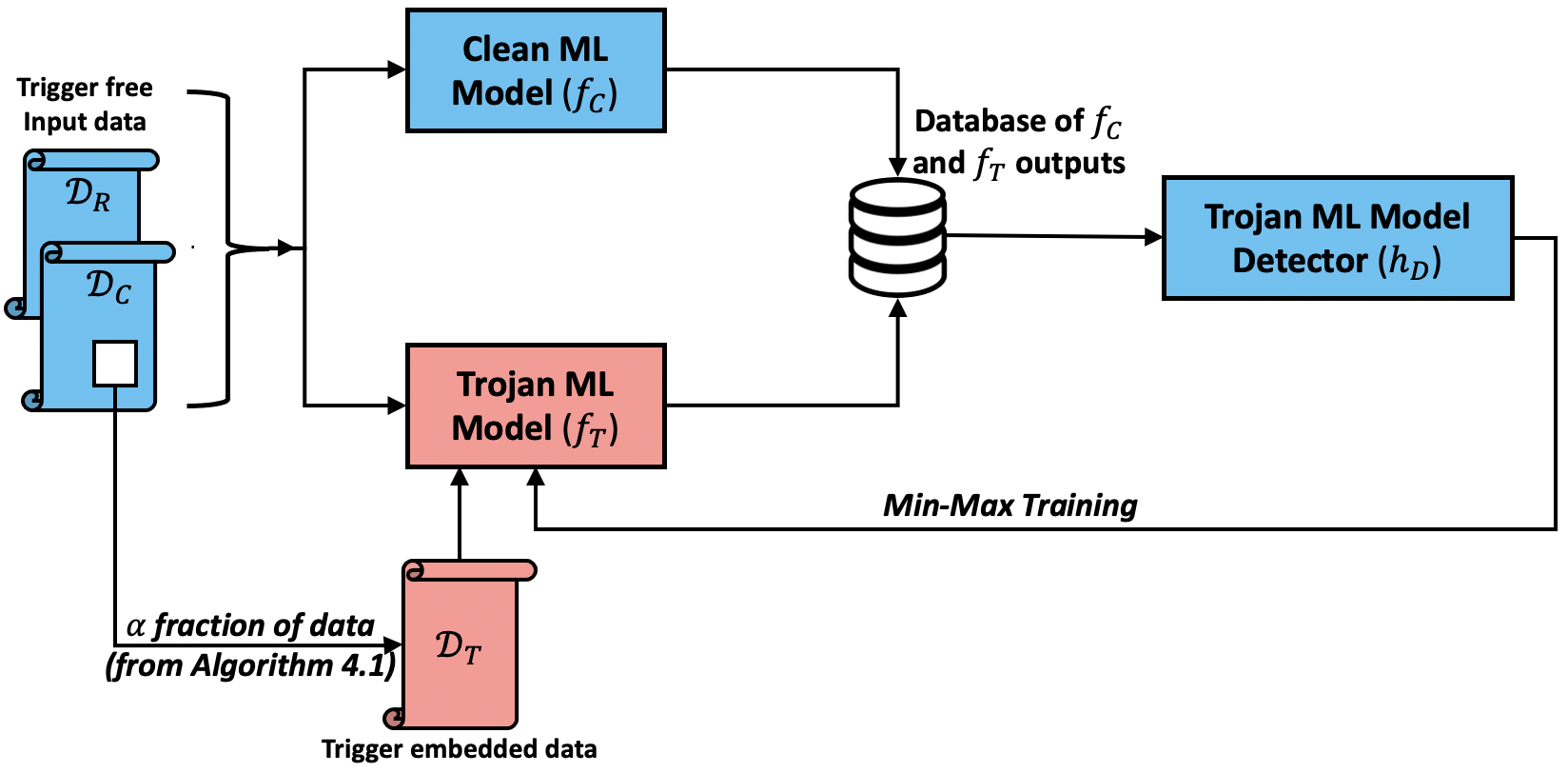}
    \caption{
    Schematic of Trojan model training. 
    Output of the min-max game returns the trained Trojan model $(f_{T}^{*})$ and the trained  detector $(h_{D}^{*})$.
    $\mathscr{D}_{C}$, $\mathscr{D}_{T}$, and $\mathscr{D}_{R}$ represent the set of clean samples, the set of samples containing a Trojan trigger, and the set of randomly generated samples, respectively. 
    The dataset $\mathscr{D}_{T} \subset \mathscr{D}_{C}$ is the fraction of samples ($\alpha$) embedded with a Trojan, as computed by the Submodular Trojan algorithm. The dataset $\mathscr{D}_{R}$ is used by the detector $(h_D)$ to distinguish between outputs of a clean model $(f_C)$ and a Trojan model $(f_T)$. Training data for $h_D$ consist of outputs of $f_C$ (with label``$0$'') and $f_T$ (with label ``$1$'').  
    }\label{fig:GAN-Train-AI-Trojan}
\end{figure}
\subsection{Analysis and Algorithm}\label{subsec:Game}
We observe that the adversary and the detection mechanism have conflicting objectives- while the detection mechanism aims to identify whether outputs corresponding to a set of random inputs are coming from a clean or a Trojan model, the adversary aims to ensure that the backdoor in the Trojan model remains undetectable. 
This interaction can be modeled as a two-player game between the Trojan model (\emph{generator}) and detector (\emph{discriminator}) \cite{goodfellow2014generative}. 
Figure~\ref{fig:GAN-Train-AI-Trojan} illustrates the overall setup. 
We denote the clean model by $f_C$, the Trojan model by $f_{T}$, and the detector by $h_{D}$. 
 
Let $p_C$ be the distribution from which each data sample $x_i\in\mathscr{D}_C$ is drawn from. 
We denote the set of (random) data samples used by $h_{D}$ to detect Trojan models by $\mathscr{D}_{R}$, which satisfies $\mathscr{D}_{R} \cap \mathscr{D}_C = \emptyset $. 
Let $\theta_D$ and $\theta_T$ denote weights of neural networks that parameterize the detector $h_{D}$ and the Trojan model $f_T$. For brevity, we define $z_{T} := f_T(x_{R}; \theta_{T})$ and $z_{C} := f_C(x_{R})$. 
Then the output of $h_{D}$ is of the form, $h_{D}(z; \theta_{D})$, representing the probability that an input $z$ is generated by the clean model $f_{C}$. The probability that an input $z$ is generated by the Trojan model can be represented as $1-h_D(z;\theta_D)$.
The interaction between the Trojan ML model and detector evolves in an iterative manner.
Each training iteration consists of two steps: (1) updating hyperparameters of $h_D$, and (2) updating hyperparameters of $f_T$.

\begin{algorithm}[!h]
\caption{Min-Max (MM) Trojan}\label{alg:minmax}
\begin{algorithmic}[1]
\Require $f_{C}(.; \theta_C), f_{T}(.;\theta_T), \mu, \sigma, itr, \mathscr{D}_T,\gamma_1, \gamma_2, \gamma_3>0$

\For {$i=1:itr$}
\State $\mathscr{D}_R\leftarrow\{x:x\sim\mathcal{N}(\mu,\sigma)\}$
\State $S\leftarrow \{[f_{T}(img;\theta_T),1],[f_{C}(img;\theta_C),0]: \forall x\in\mathscr{D}_R\}$
\State $L_1= \frac{1}{|S|}\sum_{(q,s)\in S} \frac{\partial}{\partial \theta_D} \mathcal{L}_{CE}(h_D(q;\theta_D),s)$
\State $\theta_D \gets \theta_D-\gamma_1 L_1$
\State $L_2=  \frac{1}{N'} \sum_{x_i\in \mathscr{D}_R}\frac{\partial}{\partial \theta} \mathcal{L}_{CE}( h_D(f_T(x;\theta_T);\theta_D),1) $ 
\For {$(x^k,y^k) \in \mathscr{D}_T$}

\State $L_3=L_3+ \frac{\partial}{\partial \theta} \mathcal{L}_{CE} (Y_k, f_T(x_k;\theta_T)) $
\EndFor
\State $\theta_T \gets \theta_T+ \gamma_2 L_2 -\gamma_3 L_3$
\EndFor
\end{algorithmic}
\end{algorithm}

In the first step, the Trojan model detector 
gathers outputs of clean and Trojan models for randomly generated inputs and labels them with $0$ if outputs are coming from clean models and $1$ if they are coming from a Trojan model. 
Let $\bar{p}_{T}$ and $\bar{p}_{C}$ denote probability distributions over all possible outputs $z_{T}$ and $z_{C}$, respectively. 
The detector 
then uses data of the form $\{ (z_{C},~0), (z_{T},~1) \}$ to update its parameters. 
The training procedure of the detector can be modeled as an optimization problem given by:
\begin{equation}\label{eq:max}
    \max_{\theta_{D}} \hspace{1mm} \mathbb{E}_{z_{T} \sim \bar{p}_{T}} [\log (1-h_D(z_{T};\theta_{D}))] +  \mathbb{E}_{z_{C} \sim \bar{p}_{C}} [\log (h_D(z_{C};\theta_{D}))].
\end{equation}

In the second step, the Trojan model uses outputs of the detector to update its hyper-parameters. The Trojan model maintains high accuracy on both clean and Trojan samples by jointly optimizing 
\begin{eqnarray}\label{eq:min}
    \min_{\theta_{T}} \hspace{1mm} \mathbb{E}_{z_{T} \sim \bar{p}_{T}} [\log (1-h_D(z_{T};\theta_{D}))] &+& \mathbb{E}_{x_T \sim {p}_C} [\mathcal{L}_{CE}(Y_T,f(x_T;\theta_T))]  \nonumber \\
    &+& \mathbb{E}_{x_C \sim p_C} [\mathcal{L}_{CE}(Y_C,f(x_C;\theta_T))].
\end{eqnarray}

Let $p_R(x_R)$ be the probability distribution over data samples $x_R \in \mathscr{D}_R$. Note that the term $\log (h_D(f_C(x_{R});\theta_{D}))$ is independent of $\theta_{T}$. In addition, the terms $\mathcal{L}_{CE}(Y_T,f(x_T;\theta_T))$ and $\mathcal{L}_{CE}(Y_C,f(x_C;\theta_T))$ are independent of $\theta_D$. Thus we can formulate the overall problem as a min-max optimization problem as follows:
\begin{eqnarray}\label{eq:min-max}
    \min_{\theta_{T}} \max_{\theta_{D}} \hspace{1mm} &&\mathbb{E}_{x_{R} \sim p_{R}} [\log (1-h_D(f_T(x_{R}; \theta_{T});\theta_{D}))] +  \mathbb{E}_{x_{R} \sim p_{R}} [\log (h_D(f_C(x_{R});\theta_{D}))] \\ \nonumber 
    &+& \mathbb{E}_{x_T \sim {p}_C} [\mathcal{L}_{CE}(Y_T,f(x_T;\theta_T))]  
    + \mathbb{E}_{x_C \sim p_C} [\mathcal{L}_{CE}(Y_C,f(x_C;\theta_T))]. 
\end{eqnarray}

The main result of this section establishes that the game between the detector and the Trojan model described by Equations (\ref{eq:max}) and (\ref{eq:min}) always results in the Trojan model winning the game. 
We show this by analyzing the equilibrium of this game, the output  probability distributions of the Trojan and clean models are identical. This means that the detector will not be able to distinguish between the two. 

\begin{prop}\label{prop:NE1} Let $z_{T} := f_T(x_{R}; \theta_{T})$ and $z_{C} := f_C(x_{R})$. Let $\bar{p}_{T}$ and $\bar{p}_{C}$ denote probability distributions associated with the samples $z_{T}$ and $z_{C}$, respectively. Then
the optimal solution of the min-max optimization problem in Eqn.~\eqref{eq:min-max} is such that $ \bar{p}_{T} =  \bar{p}_{C}$. 
\end{prop}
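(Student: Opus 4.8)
The plan is to treat Eqn.~\eqref{eq:min-max} as an instance of the classical generative adversarial optimality argument \cite{goodfellow2014generative}, with the Trojan model $f_T$ playing the role of the generator and the detector $h_D$ that of the discriminator. The first observation to record is that the two cross-entropy terms $\mathbb{E}_{x_T \sim p_C}[\mathcal{L}_{CE}(Y_T, f(x_T;\theta_T))]$ and $\mathbb{E}_{x_C \sim p_C}[\mathcal{L}_{CE}(Y_C, f(x_C;\theta_T))]$ do not depend on $\theta_D$ and are evaluated over the data distribution $p_C$, whereas the adversarial terms are evaluated over the random-input distribution $p_R$ with $\mathscr{D}_R \cap \mathscr{D}_C = \emptyset$. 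Consequently these accuracy terms play no role in the inner maximization over $\theta_D$, and because they act on an input support disjoint from that of the matching objective, they do not conflict with driving $\bar{p}_T$ toward $\bar{p}_C$; the distributional conclusion is governed entirely by the two expectations containing $h_D$.

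First I would fix $\theta_T$ (equivalently $\bar{p}_T$) and solve the inner maximization for the optimal detector. Writing the two $h_D$-dependent expectations as a single integral over the output space yields an integrand $\bar{p}_T(z)\log(1 - h_D(z)) + \bar{p}_C(z)\log h_D(z)$ at each point $z$. Since $h_D(z)$ may be chosen independently for each $z$ in the nonparametric limit, I would maximize pointwise: the map $y \mapsto a\log(1-y) + b\log y$ on $(0,1)$ is maximized at $y^\star = b/(a+b)$, so the optimal detector is $h_D^\star(z) = \bar{p}_C(z)/\big(\bar{p}_T(z) + \bar{p}_C(z)\big)$ and $1 - h_D^\star(z) = \bar{p}_T(z)/\big(\bar{p}_T(z) + \bar{p}_C(z)\big)$.

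Next I would substitute $h_D^\star$ back and minimize over $\theta_T$. Adding and subtracting $\log 2$ in each expectation converts the resulting objective into
\begin{equation*}
-\log 4 + \mathrm{KL}\!\left(\bar{p}_T \,\Big\|\, \tfrac{\bar{p}_T + \bar{p}_C}{2}\right) + \mathrm{KL}\!\left(\bar{p}_C \,\Big\|\, \tfrac{\bar{p}_T + \bar{p}_C}{2}\right) = -\log 4 + 2\,\mathrm{JSD}(\bar{p}_T \,\|\, \bar{p}_C).
\end{equation*}
Because the Jensen--Shannon divergence is nonnegative and vanishes exactly when its two arguments coincide, the outer minimum is attained precisely at $\bar{p}_T = \bar{p}_C$, which is the claim.

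The step I expect to be the main obstacle is justifying attainability and compatibility, rather than the divergence computation, which is routine. Specifically, the argument silently assumes the Trojan model has enough capacity to realize the distribution $\bar{p}_T = \bar{p}_C$ on random inputs while simultaneously satisfying the accuracy objectives on the data distribution; I would argue this is feasible precisely because the two objectives are supported on the disjoint input sets $\mathscr{D}_R$ and $\mathscr{D}_C$, so no single input is asked to meet contradictory requirements. A secondary point to make rigorous is the exchange of maximization and integration used in the pointwise optimization, valid in the infinite-capacity regime where $h_D$ ranges over all measurable functions into $(0,1)$. At the optimum $\bar{p}_T = \bar{p}_C$ one then has $h_D^\star \equiv 1/2$, confirming that the detector can do no better than random guessing and the adversary wins with probability one.
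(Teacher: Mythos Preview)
Your proposal is correct and follows essentially the same route as the paper's proof: both fix $\theta_T$, solve the inner maximization pointwise to obtain $h_D^\star(z)=\bar{p}_C(z)/(\bar{p}_T(z)+\bar{p}_C(z))$, substitute back, and reduce the outer minimization to the pair of KL divergences to the midpoint (equivalently, the Jensen--Shannon divergence), concluding $\bar{p}_T=\bar{p}_C$. Your treatment is in fact slightly more careful than the paper's, since you explicitly address why the two cross-entropy accuracy terms do not interfere (disjoint input supports) and flag the infinite-capacity assumption underlying the pointwise optimization, both of which the paper's proof leaves implicit.
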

\begin{proof}
For a fixed $\theta_T$, consider the maximization part of the min-max problem in Eqn.~\eqref{eq:min-max}:
\begin{eqnarray}\label{eq:Dval}
    &&\mathbb{E}_{x_{R} \sim p_{R}} [\log (1-h_D(f_T(x_{R}; \theta_{T});\theta_{D}))] +  \mathbb{E}_{x_{R} \sim p_{R}} [\log (h_D(f_C(x_{R});\theta_{D}))] \nonumber \\
    &=&\mathbb{E}_{z_{T} \sim \bar{p}_{T}} [\log (1-h_D(z_T;\theta_{D}))] +  \mathbb{E}_{z_{C} \sim \bar{p}_{C}} [\log (h_D(z_C;\theta_{D}))] \nonumber \\
    &=&\mathbb{E}_{z \sim \bar{p}_{T}} [\log (1-h_D(z;\theta_{D}))] +  \mathbb{E}_{z \sim \bar{p}_{C}} [\log (h_D(z;\theta_{D}))] \nonumber \\
    &=& \int_{z} \bar{p}_T(z)  \log (1-h_D(z;\theta_{D})) dz + \int_{z} \bar{p}_C(z) \log (h_D(z;\theta_{D})) dz, \nonumber 
\end{eqnarray}
where the first two equalities hold by variable substitution, and the last equality holds by the definition of expectation.

To complete solving the maximization sub-problem, we leverage the fact that for any $(a, b) \in \mathbb{R}^{2} \backslash\{0,0\}$, the function $x \rightarrow a\log(1-x) + b\log(x)$ achieves its maximum in $[0,1]$ at $\frac{b}{a+b}$. 
Therefore, for a fixed $\theta_T$, $h_{D}(z, \theta_{D}^{\*}) = \frac{\bar{p}_C(z)}{\bar{p}_T(z)+\bar{p}_C(z)}$ when $\theta_T$ is given.

With the above value of $h_{D}(z, \theta_{D}^{\*})$, consider the minimization part of Eqn.~\eqref{eq:min-max}. We have 

\begin{eqnarray}\label{eq:Gval1}
    && \int_{z} \bar{p}_T(z)  \log \Big(1-\frac{\bar{p}_C(z)}{\bar{p}_T(z)+\bar{p}_C(z)}\Big) dz + \int_{z} \bar{p}_C(z) \log \Big(\frac{\bar{p}_C(z)}{\bar{p}_T(z)+\bar{p}_C(z)}\Big) dz \nonumber \\
    &=& \int_{z} \bar{p}_T(z)  \log \Big(\frac{\bar{p}_T(z)}{\bar{p}_T(z)+\bar{p}_C(z)}\Big) dz + \int_{z} \bar{p}_C(z) \log \Big(\frac{\bar{p}_C(z)}{\bar{p}_T(z)+\bar{p}_C(z)}\Big) dz. \label{Eq:IntermediateStep}
\end{eqnarray}
The two terms in Eqn. (\ref{Eq:IntermediateStep}) can be interpreted as KL divergences between probability distributions. 
The KL divergence between two probability distributions $p$ and $q$ is defined by $KL(p||q) := \int_{-\infty}^{+\infty}p(x)\log\Big(\frac{p(x)}{q(x)}\Big)dx$. Thus, $\int_{-\infty}^{+\infty}p(x)\log\Big(\frac{p(x)}{p(x)+q(x)}\Big)dx = KL(p||(p+q)/2)$. Here, $(p+q)/2$ is used to ensure validity of the probability distribution such that $\int_{x} (p(x)+q(x))/2) = 1$. 
Substituting in Eqn. (\ref{Eq:IntermediateStep}), we obtain
\begin{align}\label{eq:Gval1}
& \int_{z} \bar{p}_T(z)  \log \Big(\frac{\bar{p}_T(z)}{\bar{p}_T(z)+\bar{p}_C(z)}\Big) dz + \int_{z} \bar{p}_C(z) \log \Big(\frac{\bar{p}_C(z)}{\bar{p}_T(z)+\bar{p}_C(z)}\Big) dz \nonumber \\
    =~& \text{KL}\Big(\bar{p}_{T} || \frac{\bar{p}_{T} + \bar{p}_{C}}{2}\Big) + \text{KL}\Big(\bar{p}_{C} || \frac{\bar{p}_{T} + \bar{p}_{C}}{2}\Big). \nonumber
\end{align}
The minimum is achieved when $\bar{p}_T =\frac{\bar{p}_{T} + \bar{p}_{C}}{2}$, or equivalently, when $\bar{p}_T = \bar{p}_C$. 
\end{proof}
Proposition \ref{prop:NE1} shows that the detector will not be able to distinguish between probability distributions of the Trojan and clean model outputs since they are identical (i.e., $\bar{p}_T = \bar{p}_C)$.  Hence the output of the detector $\theta_T$, $h_{D}(z, \theta_{D}^{\*}) = \frac{\bar{p}_C(z)}{\
\bar{p}_T(z)+\bar{p}_C(z)} = 0.5$ for both clean and Trojan samples. This is analogous to decision making based on the toss of a fair coin, thus rendering the detector obsolete. 
Therefore, the adversary wins the min-max game by always evading detection.

The \emph{Min-Max (MM) Trojan} Algorithm (Algorithm \ref{alg:minmax}) describes the min-max optimization procedure. 
In the $\min$ step, we first generate a set of arbitrary inputs (images,
in our case) and provide them to both non-Trojan and Trojan models. 
The outputs from these models are used to train the discriminator. 
In the $\max$ step, we update parameters of the Trojan model to maximize the loss of the discriminator by generating outputs that are similar to outputs of the non-Trojan model for the arbitrarily generated inputs. 
\section{Evaluation}
\label{sec:evaluation}
This section introduces our simulation setup and details the results of our evaluations of the Submodular Trojan and MM-Trojan algorithms described in Sections \ref{sec:Submod} and \ref{sec:GameModel}. 
The objective of the adversary is to (i) minimize the fraction of samples into which a Trojan needs to be embedded, and (ii) evade detection by an adaptive instance-based detection mechanism, while simultaneously ensuring high accuracy of classifying clean samples correctly and that of classifying Trojan samples to the target class. 

\subsection{Datasets and Experiment Setup} 

We describe the datasets and architectures of neural networks used in our experiments. 
In each case, a white square of size $4 \times 4$ is used as the trigger.\\
\noindent{\it MNIST:} 
This dataset contains $60000$ images of hand-written digits ($\{0,1,\cdots,,9\}$), of which $50000$ are used for training and $10000$ for testing. The size of each image is $28 \times 28$. 
The DNNs used to learn a classifier for the MNIST dataset consists of two convolutional layers, each containing $5$ kernels, and channel sizes of $16$ and $32$ respectively. 
This is followed by maxpooling and fully connected layers of size $512$. 

\noindent{\it CIFAR-10:} 
This dataset contains $60000$ images of 10 different objects (e.g., car, horse, etc.), of which $50000$ are used for training and $10000$ for testing. The size of each image is $32 \times 32$. 
The DNNs used to learn a classifier for the CIFAR-10 dataset consists of four convolutional layers, each containing $3$ kernels, and channel sizes of $32$, $32$, $64$, and $64$ respectively. 
This is followed by maxpooling and two fully connected layers of size $256$. 

\noindent{\it EuroSAT:} This dataset is based on Sentinel-2 satellite images of regions for land use and land cover classification published  
in~\cite{helber2019eurosat}. 
The dataset contains 27000 images with resolution of $64\times64$. We use $80\%$ of these images for training and $20\%$ for test.

\noindent{\it Detector:} 
The detector uses a network with one (fully connected) hidden layer of size $20$ and a softmax activation in the last layer. 
The output of the detector is a vector whose first element indicates the probability that the input to the the detector was from a clean model, and whose second element indicates that the input came from a Trojan model. 

\subsection{Metrics and Baselines}
In the following, we present the evaluation metrics and baselines.

\underline{\emph{ Accuracy on Clean Samples (Acc-C):}} This metric measures the accuracy of classification of clean samples by an ML model at test time, and is defined as:
\begin{equation}
    \text{Acc-C}= \frac{1}{|\mathscr{D}_{test}|} \sum_{(x_i,Y_i) \in\mathscr{D}_{test}} \mathbb{I}(\text{argmax}(Y_i) = \text{argmax}(f_T( x_i;\theta_T))), 
\end{equation}
where $|\mathscr{D}_{test}|$ is the size of a test set that contains only clean samples, argmax returns the class of $x_i$ with highest probability, and $\mathbb{I}(\cdot)$ returns $1$ if its argument is true, and $0$ otherwise.

\underline{\emph{Accuracy on Trojan Samples (Acc-T):}} 
The metric Acc-T is computed by inserting a trigger into all samples in the test set, $\mathscr{D}_{test}$, and counting the number of samples that are classified by the model to the target class, denoted as $\text{argmax}(Y_T)$. 
\begin{equation}
    \text{Acc-T}= \frac{1}{|\mathscr{D}_{test}|} \sum_{x_i \in\mathscr{D}_{test}} \mathbb{I}(\text{argmax}(Y_T) = \text{argmax}(f_T(x_i \times (1-\Delta)+ \delta \times \Delta;\theta_T))). 
\end{equation}
We note that the above quantity is sometimes termed an \emph{attack success rate (ASR)} in the machine learning literature~\cite{gao2020backdoor}. 
However, we use the notation Acc-T to avoid ambiguity with the objective of the adversary to evade detection in our paper. 

\underline{\emph{Baseline}:} 
The MM Trojan algorithm in Section \ref{sec:GameModel} which was shown to be able to bypass Trojan model detection might result in a decrease in accuracy of classification of clean samples. 
We evaluate the performance of MM Trojan by comparing its Acc-C and Acc-T with Clean and Baseline Trojan models. 
The Submodular Trojan algorithm in Section \ref{sec:Submod} computes the optimal fraction $\alpha$ of the dataset into which a Trojan must be embedded in a constructive manner. 
We start with an initial choice of this parameter, and gradually increase it until there is no discernible improvement in adversary performance measured in terms of a decrease in the value of a loss function. 

\begin{figure*}
\centering
\begin{tabular}{ c c c c}
    \includegraphics[trim={2cm 2cm 6cm 2cm}, scale=0.075]{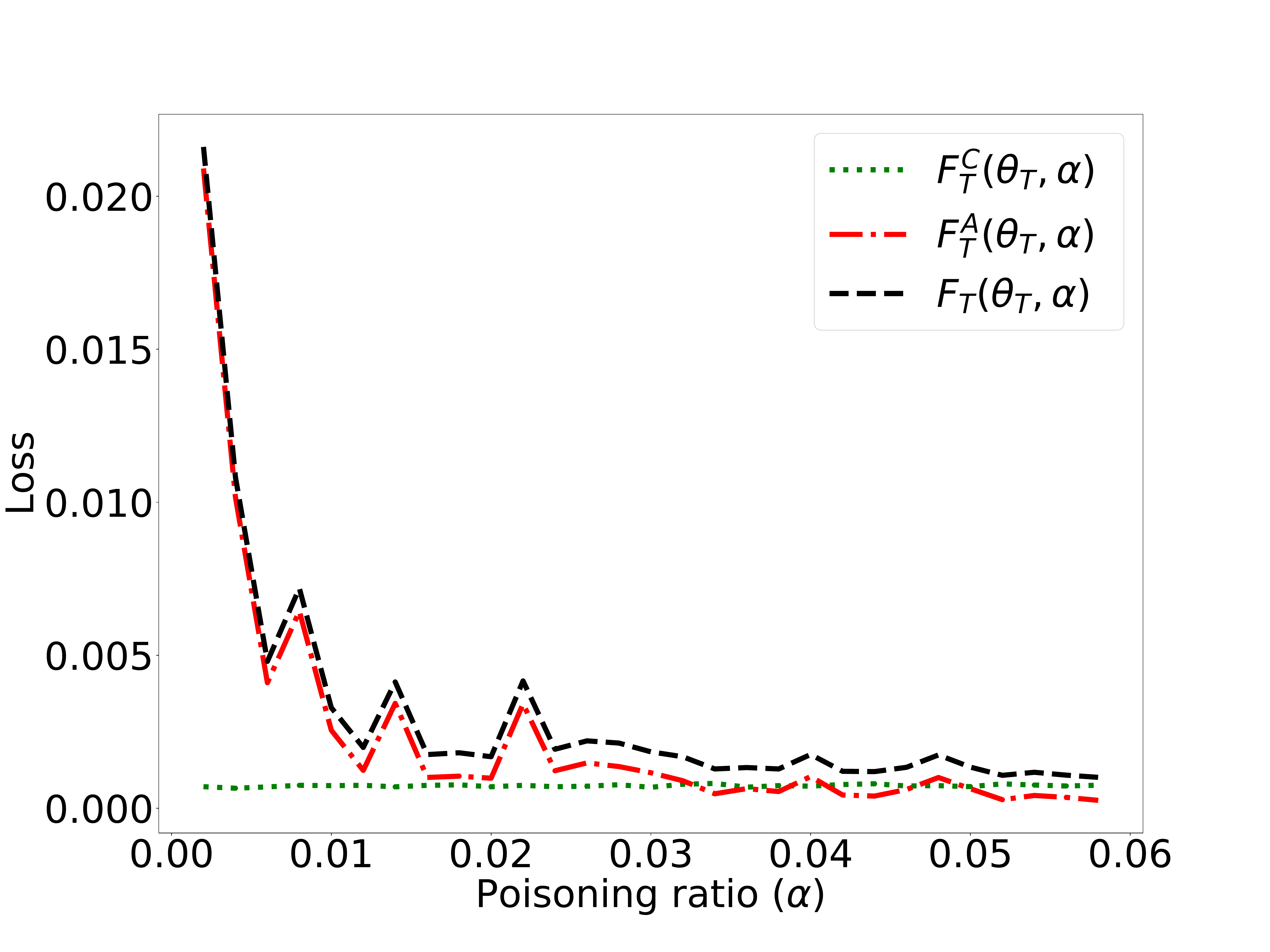} & 
    \includegraphics[trim={2cm 2cm 6cm 2cm}, scale=0.075]{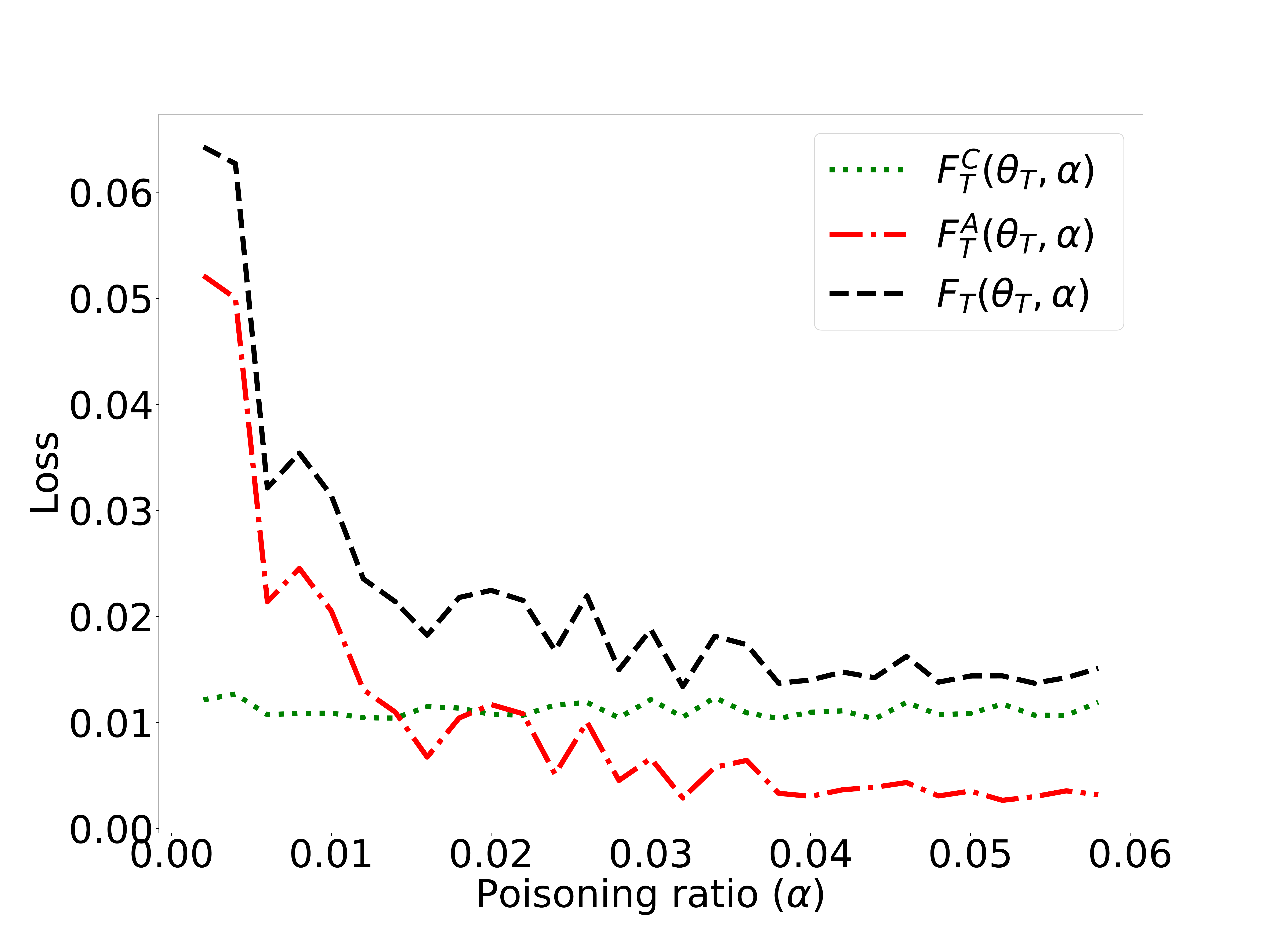} &
    \includegraphics[trim={2cm 2cm 6cm 2cm}, scale=0.075]{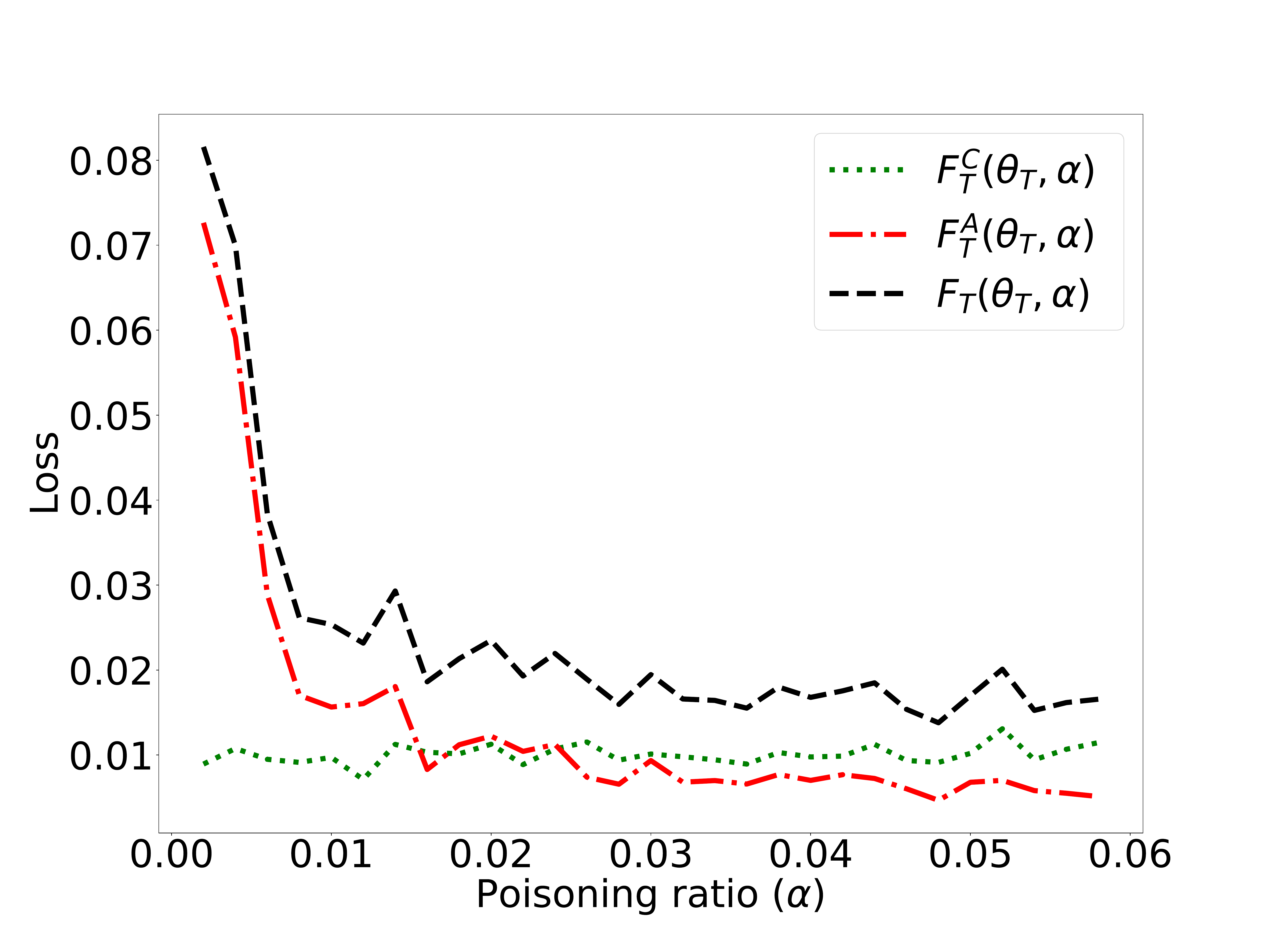}  \\
    
      MNIST& CIFAR-10& EuroSAT\\
         
\end{tabular}
\caption{
The figure presents the loss incurred during training as functions of the fraction of samples embedded with the Trojan (poisoning ratio, $\alpha$). The green curves represent the loss $F_T^C(\theta_T,\alpha)$ as a function of poisoning ratio $\alpha$ when training over the clean samples, and the red curves represent the loss $F_T^A(\theta_T,\alpha)$ as a function of poisoning ratio $\alpha$ due to embedding Trojans by the adversary. The black curves present the total loss $F_T(\theta_T,\alpha)=F_T^A(\theta_T,\alpha)+F_T^C(\theta_T,\alpha)$ as defined in Eqn. \eqref{eq:alpha_prob}.
}
\label{fig:subloss}
\end{figure*}

\subsection{Evaluation of Submodular Trojan Algorithm (Algorithm \ref{alg:greedy})}

Figures~\ref{fig:subloss} and \ref{fig:subacc} present experimental evaluations of the Submodular Trojan Algorithm on the MNIST, CIFAR-10, and EuroSAT datasets. 
In all experiments, we initialize the fraction of the dataset $\alpha$ into which a Trojan has been embedded to $0.002$. 
In each iteration, we train a model to minimize the loss function in Eqn. (\ref{eq:alpha_prob}). 
This loss function consists of two terms- one over the clean samples, and another over Trojan samples. 
We repeat the procedure by increasing the value of $\alpha$ (Line 6 of Algorithm \ref{alg:greedy}). 
The procedure terminates when increasing $\alpha$ does not result in a significant decrease in the value of the loss function. 

Figure~\ref{fig:subloss} shows the change in loss values as the value of $\alpha$ increases. 
 In each case, we observe that the loss term corresponding to Trojan samples (red curve) initially decreases as $\alpha$ increases. 
 However, after exceeding a threshold, the decrease in the total loss value (black curve) is insignificant. 
 The loss term corresponding to clean samples (green curve) might increase as the fraction of clean samples decreases with increase in $\alpha$. 
 This demonstrates that the adversary has little advantage in poisoning a fraction of samples larger than the threshold. 
 For example, $\alpha^* \approx 0.03$ for MNIST. 
Figure~\ref{fig:subacc} shows Acc-C (green curve) and Acc-T (red curve) as $\alpha$ is increased. 
 For small values of $\alpha$, the Acc-T is small but this metric improves as $\alpha$ is increased. 
 The improvement becomes less significant after a threshold of $\alpha$ is reached. 
 Simultaneously, the value of Acc-C decreases slightly, which indicates that the model might become less useful to a user, leading them to discard it. Thus poisoning a large fraction of samples will not be advantageous to the adversary. 
 
 The outcomes of our experiments is consistent with the theoretical analysis presented in Section \ref{sec:Submod}. 
 Further, the constructive manner in which the optimal $\alpha$ is determined reveals that the adversary will need to embed a Trojan into a very small fraction of the dataset ($\alpha^* \approx 0.03$ for MNIST, $\alpha^* \approx 0.04$ for CIFAR-10 and EuroSAT). 
 This is significantly smaller than the $10\%$ of training data that consisted of Trojan samples in \cite{guo2022aeva}.

\begin{figure*}
\centering
\begin{tabular}{ c c c c}

     \includegraphics[ scale=0.1]{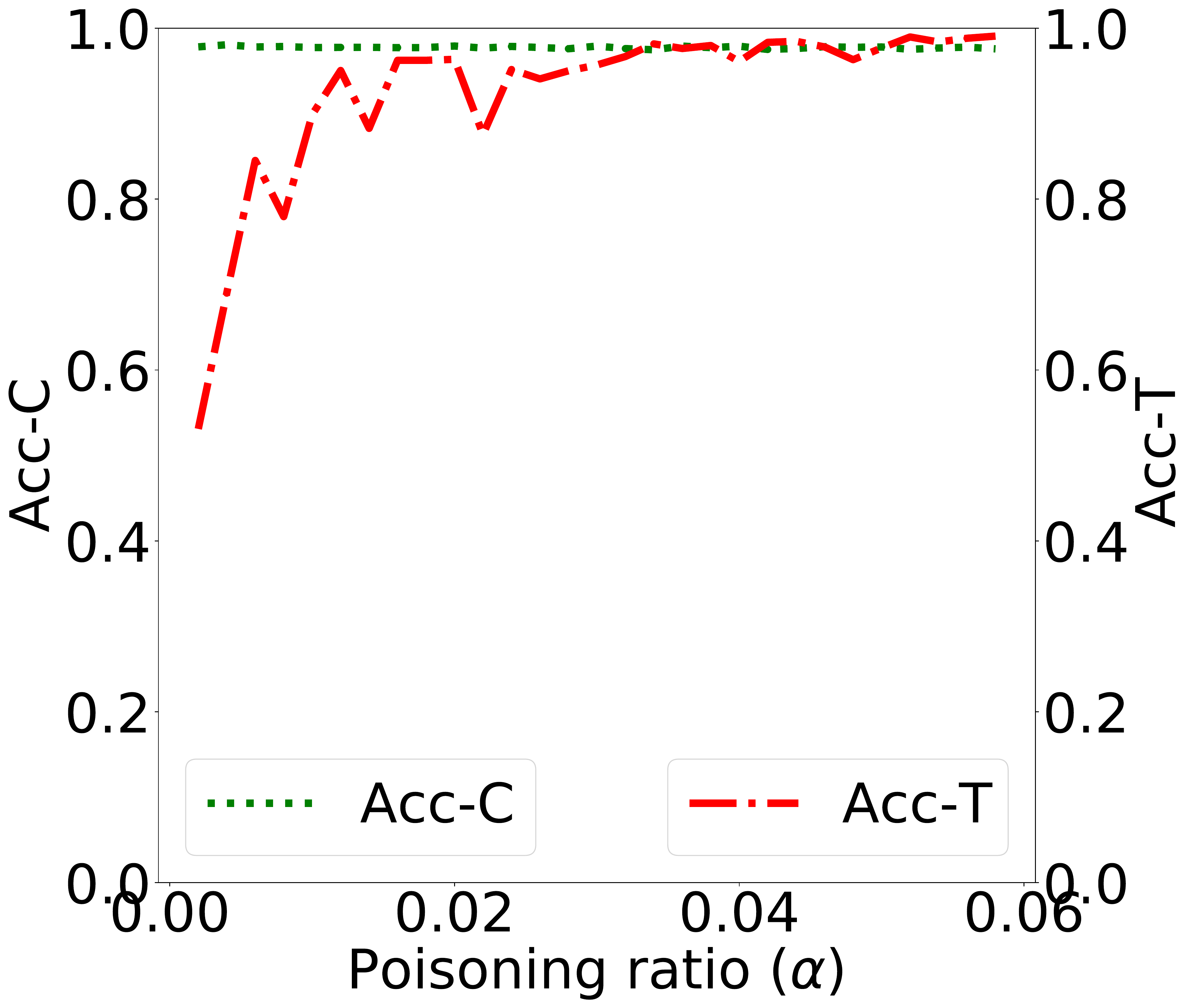} & 
    \includegraphics[scale=0.1]{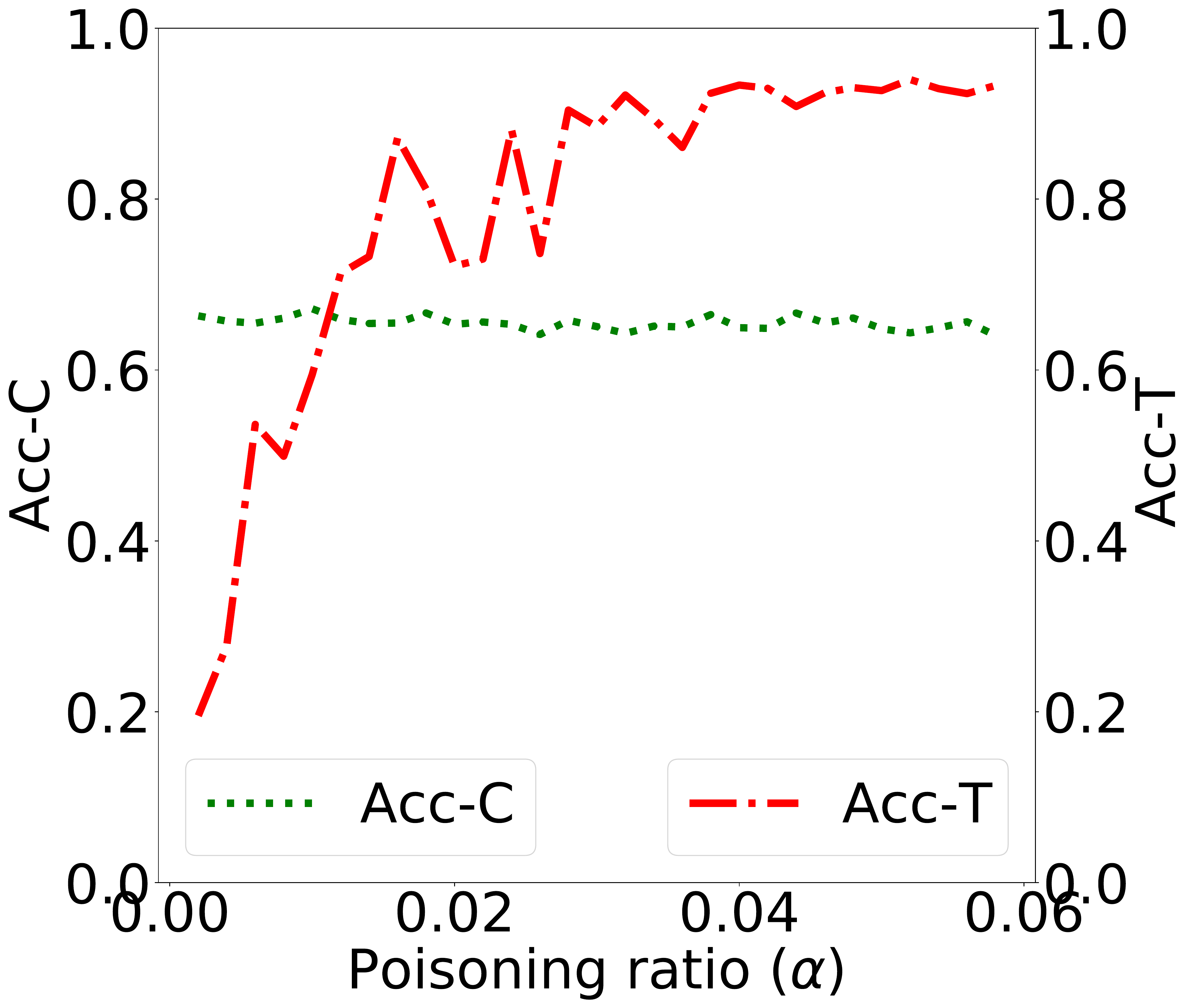} &
    \includegraphics[scale=0.1]{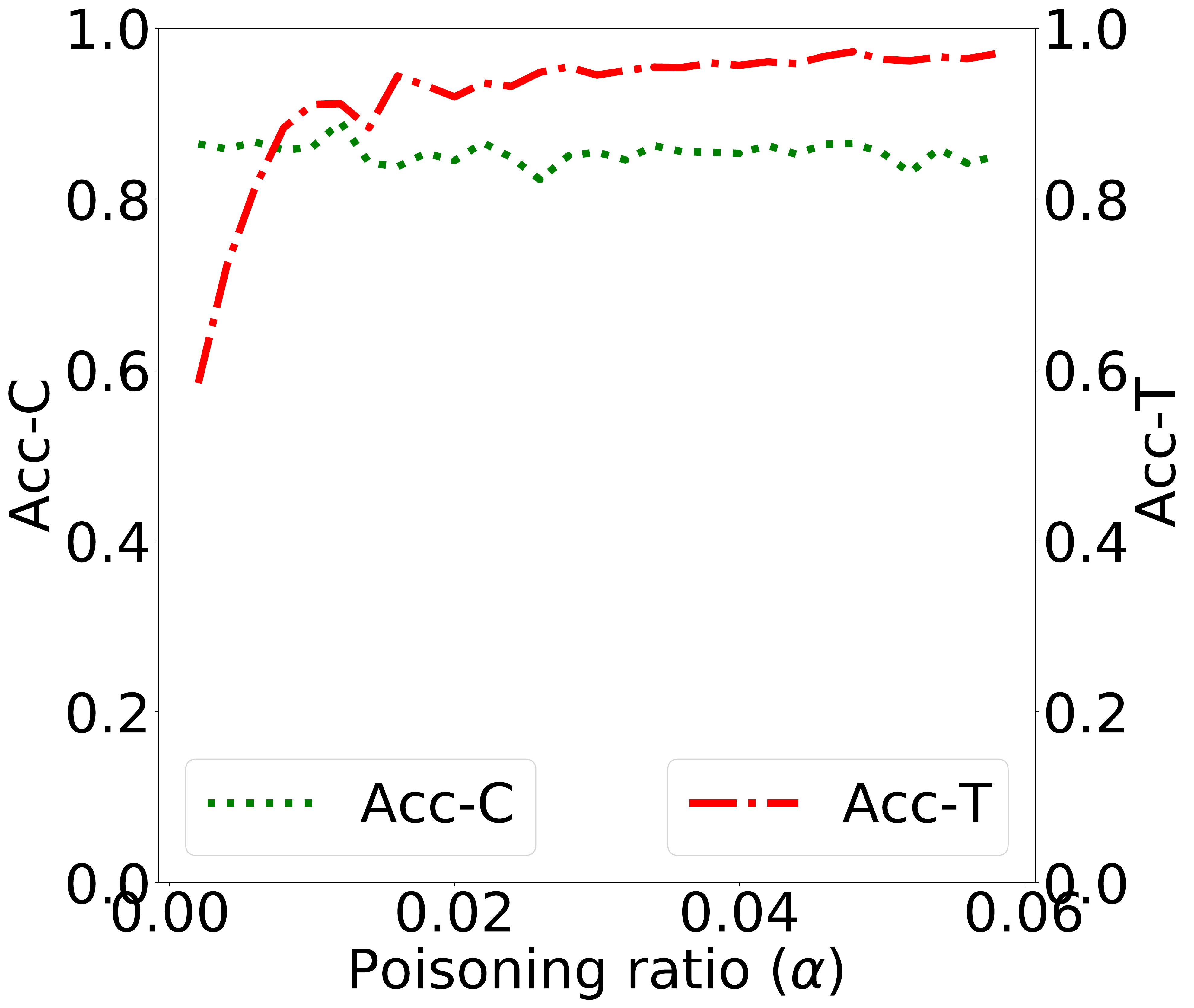}  \\
    
 \\

      MNIST& CIFAR-10& EuroSAT
\end{tabular}
\caption{The figure presents classification accuracy over the clean samples (Acc-C, green curve) and Trojan embedded samples (Acc-T, red curve) when using the MM Trojan model trained with different poisoning ratio $\alpha$. The improvement of Acc-T becomes less significant beyond a threshold value of $\alpha$, while Acc-C may reduce as $\alpha$ increases (which might lead a user to discard the model).
}
\label{fig:subacc}
\end{figure*}

\subsection{Evaluation of MM Trojan Algorithm (Algorithm \ref{alg:minmax})}

Table~\ref{tab:acc} and Figure~\ref{fig:confacc} present experimental evaluations of the MM Trojan algorithm on the MNIST, CIFAR-10, and EuroSAT datasets. 

\begin{table}[]
    \centering
    \begin{tabular}{|c|c|c|c|c|}
     \toprule
      Dataset & \shortstack{Model} & \shortstack{Acc-C} & \shortstack{ Acc-T}& \shortstack {Evasion of Detection}\\ \hline
       \multirow{3}{*}{MNIST} & Clean &  $99.12\%$ & 0& NA \\  \cline{2-5}
       & Baseline Trojan& $98.97\%$ &  $99.96\%$ &$0\%$ \\  \cline{2-5}
       & MM Trojan&  $95.36\%$  &  $95.30\%$ &$\mathbf{100\%}$ \\ \hline
        \multirow{3}{*}{CIFAR10} & Clean & $70.10\%$ & 0 & NA \\ \cline{2-5}
       & Baseline Trojan&  $70.15\%$& $93.53\%$ &$\mathbf{100\%}$ \\  \cline{2-5}
       & MM Trojan&  $69.35\%$ &  $94.90\%$ &$\mathbf{100\%}$\\ \hline
       \multirow{3}{*}{EuroSAT} & Clean & $86.09\%$ &  0 &NA\\  \cline{2-5}
       & Baseline Trojan& $83.69\%$ & $95.17\%$ &$0\%$\\  \cline{2-5}
       & MM Trojan& $86.46\%$ &  $97.13$ &$\mathbf{100\%}$\\
       \bottomrule
    \end{tabular}
    \caption{The table presents the classification accuracy on clean samples (Acc-C) and Trojan samples (Acc-T) when applying Clean, Baseline Trojan, and MM-Trojan models. The value of Acc-T for Clean models is $0$ since they do not contain a trigger. MM Trojan successfully preserves high values of Acc-C and Acc-T compared to the Baseline Trojan. In addition, MM-Trojan is also able to bypass detection $100\%$ of the time, while the Baseline Trojan does not for MNIST and EuroSAT. The Baseline Trojan bypasses detection on CIFAR-10, possibly because a detector designed to be optimal against MM Trojan might be suboptimal against the Baseline.}
    \label{tab:acc}
\end{table}
\vspace*{-5mm}
The accuracy on clean (Acc-C) and Trojan (Acc-T) samples for the Clean, Baseline Trojan, and MM Trojan models are shown in Table~\ref{tab:acc}. 
The value of Acc-C is high for Clean models. 
The absence of Trojan samples in the Clean model implies that Acc-T will be zero. 
The Baseline Trojan provides similar Acc-C to Clean, since it is trained using clean and Trojan samples. 
Acc-C and Acc-T of MM Trojan is comparable to the Baseline Trojan for all three datasets. 
Differences (drop or rise) in Acc-C or Acc-T for MM Trojan will be contingent on the choices of values of learning rates $\gamma_2$ and $\gamma_3$ in Algorithm~\ref{alg:minmax}. 
For example, while a low value of $\gamma_2$ and high value of $\gamma_3$ cause Acc-C and Acc-T to increase, it will also make it more likely that the detection probability will increase. 
The last column of the table shows the probability that an adversary will be able to bypass detection. 
We observe that an adversary trained on our MM Trojan model is able to always evade detection ($100\%$) for all considered datasets, while the Baseline Trojan cannot for MNIST and EuroSAT ($0\%$).  
Note that evading detection is not defined ($NA$) for Clean models. In the remainder of this subsection, we will detail the evasion of detection.
\begin{figure*}
\centering
\begin{tabular}{c c c c c}
    \rotatebox{90}{$\:\:\:\:\:\:\:\:\:\:\:\:$}&
    \includegraphics[trim={2cm 2cm 4cm 2cm}, scale=0.09]{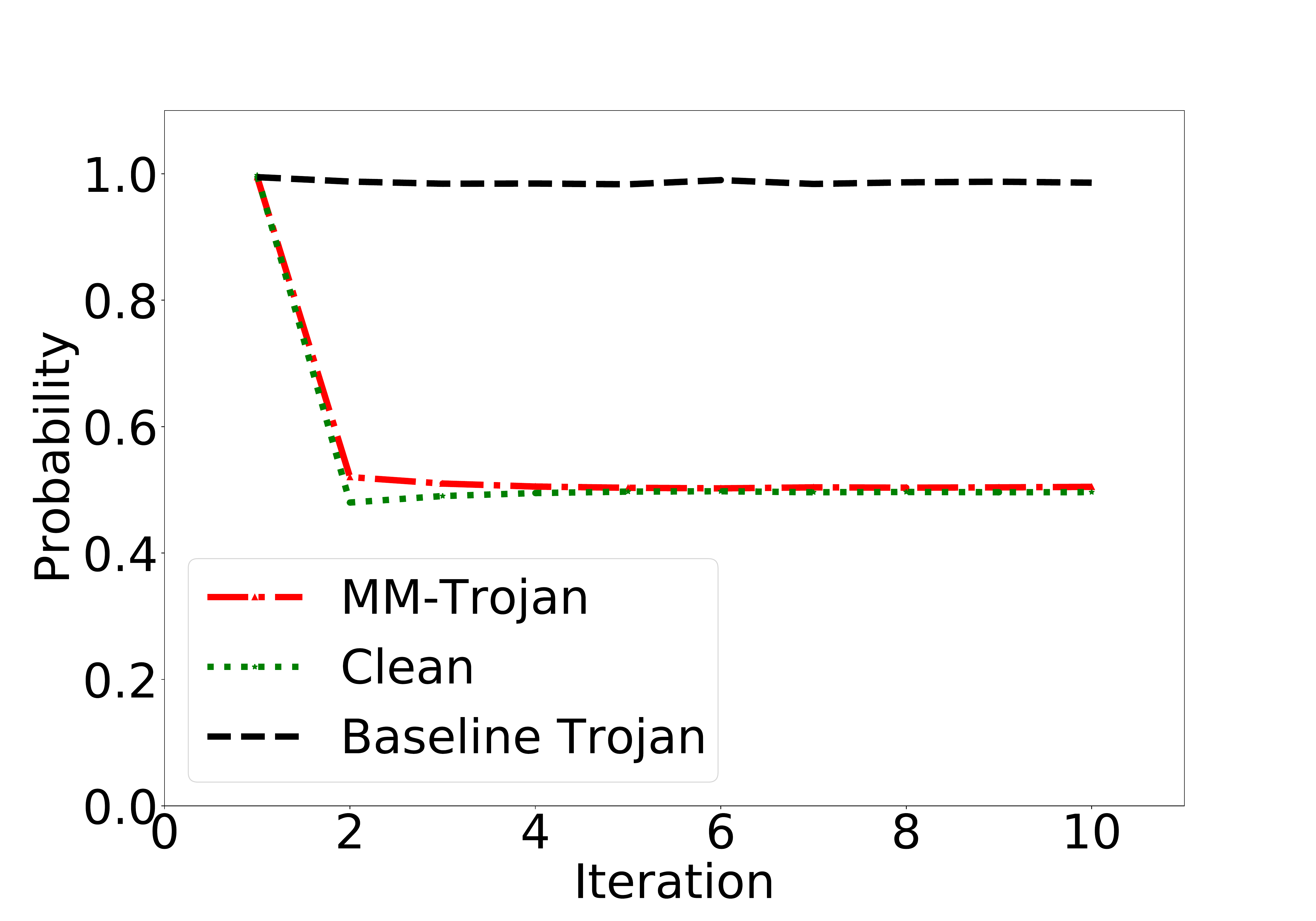} &
    \includegraphics[trim={2cm 2cm 4cm 2cm}, scale=0.09]{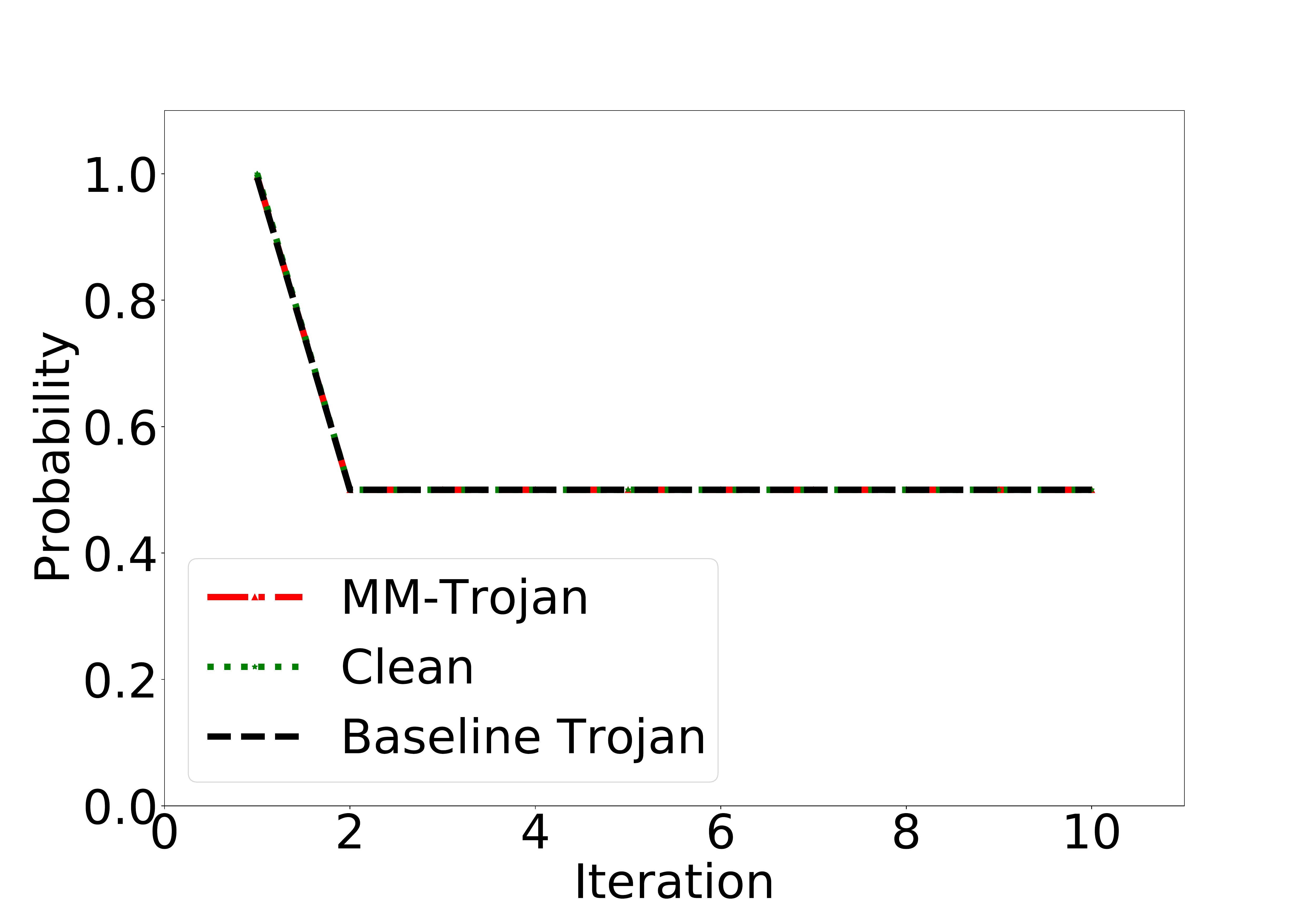} &
    \includegraphics[trim={2cm 2cm 4cm 2cm}, scale=0.09]{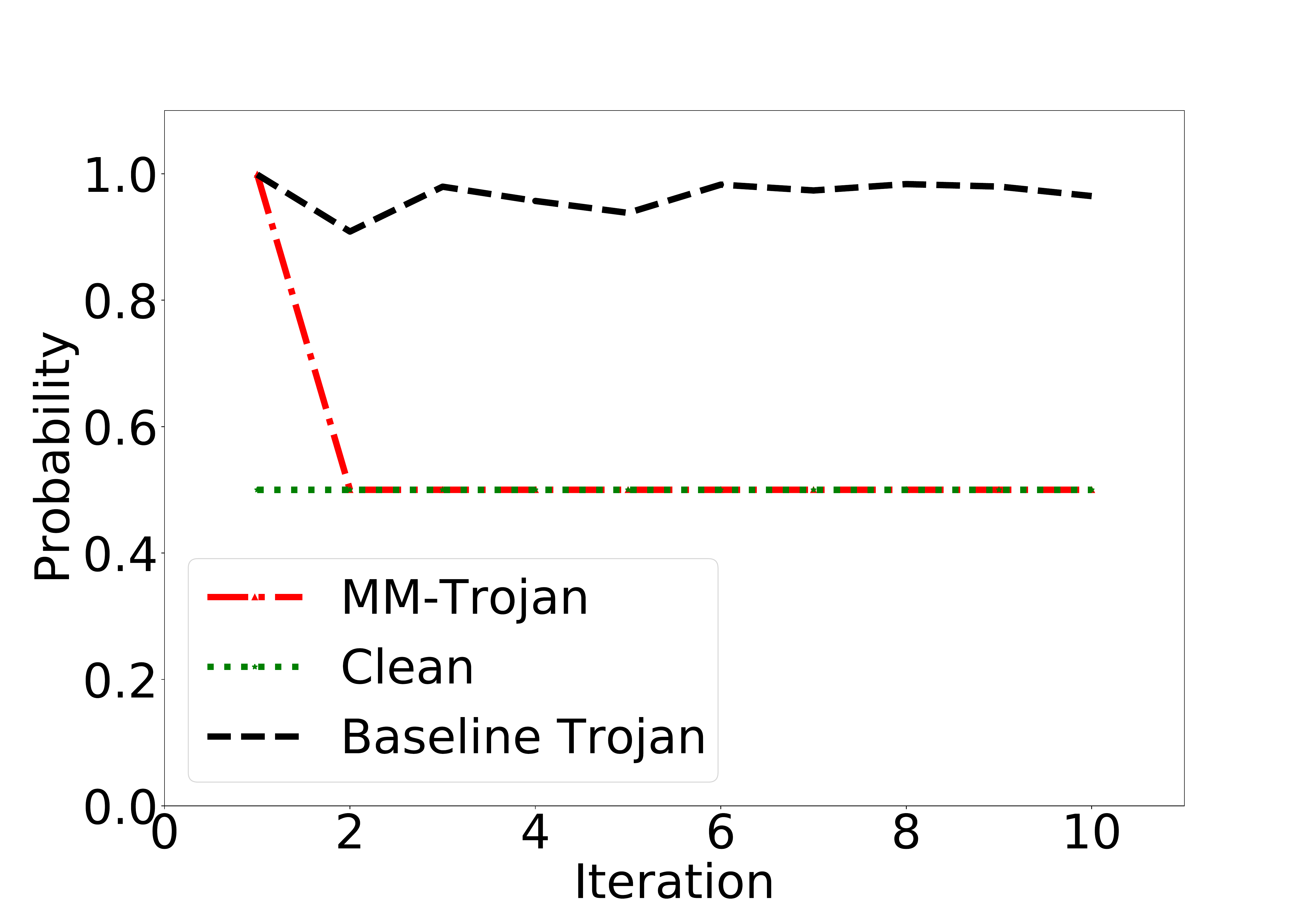}  \\
   
     \\
     & MNIST& CIFAR-10& EuroSAT\\
\end{tabular}
\caption{The figure presents the probability that the detector identifies a model as Clean or Trojan when provided with outputs generated from these models. The detector is unable to distinguish between clean and Trojan for MM Trojan models, as shown by the probability of $0.5$. In contrast, the detector is able to identify Trojan models with probability $1$ for the Baseline Trojan for MNIST and EuroSAT. The detector is unable to distinguish between clean and Trojan models for the Baseline Trojan on CIFAR-10, possibly because a detector designed to be optimal against MM Trojan might be suboptimal against the Baseline.}
\label{fig:confacc}
\end{figure*}

We provide the outputs of Clean, Baseline Trojan, and MM Trojan to the detector. 
Figure~\ref{fig:confacc} shows the probability that the detector can identify if the output is coming from a clean or a Trojan model. 
We observe that the detector is unable to distinguish outputs in the cases of Clean or MM Trojan, as illustrated by the fact that the probability value is $0.5$. 
The detector correctly classifies the Baseline Trojan for MNIST and EuroSAT. 
However, it fails for CIFAR-10. 
A possible reason for this is the fact that the detector is designed to be optimal against MM Trojan, and this might not be the case for the Baseline Trojan. 
We will examine and investigate strategies to train Trojan models to bypass a broad class of detection mechanisms in future work. 

Results from Table \ref{tab:acc} and Fig. \ref{fig:confacc} show that the performance of MM Trojan is the best in the sense that (i) the detector is bypassed for all three datasets, (ii) Acc-C is comparable to Clean and Baseline Trojan models, and (iii) Acc-T is comparable to the Baseline Trojan.
\section{Discussion}\label{sec:Discussion}
In this section, we discuss potential impacts of relaxing certain assumptions in our threat models,  analyses, and experiments in the previous sections. We also highlight some open questions that are promising research directions. 

\underline{\emph{Information available to adversary:}}  
We assume that the adversary has knowledge of statistical parameters (e.g., mean, standard deviation) of samples used by the detector to detect a Trojan model. 
A promising research direction is to characterize conditions under which the Trojan model will win the two-player game with the detector when the adversary has limited statistical information about samples used by the detector. 

\underline{\emph{Learning universal detectors:}}
In Sec. 6.4, we saw that while the detector was able to correctly distinguish between outputs of a clean and Baseline Trojan model for the MNIST and EuroSAT datasets, this was not the case for CIFAR-10. 
Our conjecture is that a detector trained to be optimal against MM Trojan might be far from optimal against other types of Trojan models. 
An interesting direction of future research is to develop techniques to train Trojan models to bypass different types of detectors. 
One could also investigate learning detectors that will be effective against a broad class of Trojan training methodologies. 

\underline{\emph{Access to intermediate DNN layers:}}  
In this paper, we assume that only black-box access to ML models is available.
The performance of the detector can be improved by enabling access to outputs of intermediate layers of DNNs~\cite{kolouri2020universal}. 
One strategy is to compare outputs of intermediate layers of a Trojan model to outputs of a clean model, assuming that the output from the last layer of the Trojan model is similar to that of a clean model for the same input distribution. 
Quantifying impacts of access to intermediate layers on Acc-C and Acc-T when the adversary uses the MM Trojan Algorithm \ref{alg:minmax} to train undetectable Trojan models is a promising research direction. 

\underline{\emph{Ranking samples:}}  
In this paper, we assume that all samples are equivalent in the sense that embedding a trigger into any sample in the set of size $\lfloor \alpha N \rfloor$ will achieve the same loss value. 
A natural extension of our approach to select the optimal subset of samples $\alpha$ in Sec. \ref{sec:Submod} is to rank samples based on their influence. 
Such a ranking can then be used by the adversary to identify `critical' samples into which a trigger can be embedded in order to optimize its objective. 
We will also examine strategies to generate \emph{highly ranked} samples to maximally affect loss values. 

\underline{\emph{Establishing tighter bounds:}} 
The objective function in Eqn \eqref{eq:alpha_prob} of the adversary may not be always be continuous in $\alpha$. 
A potential strategy to help establish tighter bounds for the Submodular Trojan Algorithm \ref{alg:greedy} is to find better approximations of Eqn \eqref{eq:alpha_prob}. 
To accomplish this, we can develop a variant of the supermodular property that captures the impact of $\alpha$ due to the $\frac{1}{\alpha N}$ 
term, as well as the discrete jump induced by changes in the limits of the summation in Eqn. \eqref{eq:alpha_prob}. 
Furthermore, the Submodular Trojan algorithm could also leverage the convexity of $\Bar{F}_T(\theta_T,\alpha)$ to yield tighter optimality guarantees.

\section{Conclusion}
\label{sec:Conclusion}

This paper presented an analytical characterization of adversarial capability and strategic interactions between an adversary and detection mechanism to identify Trojan machine learning models. 
The adversary capability was modeled in terms of the fraction $\alpha$ of the training input dataset that could be embedded with a Trojan trigger. 
We showed that the loss function had a submodular structure, which led to the design of a computationally efficient algorithm to determine this fraction with provable bounds on optimality. 
We presented a \emph{Submodular Trojan} algorithm to constructively determine the minimal fraction of samples to inject a Trojan trigger. 
To evade detection of the Trojaned model by potential detectors, we modeled the strategic interactions between the adversary and Trojan detection mechanism as a two-player game. We established that the adversary wins the game with probability one, thus bypassing detection. 
We proved this by showing that output probability distributions of a Trojan model and a clean model were identical when following our proposed \emph{Min-Max (MM) Trojan} algorithm. 
We carried out extensive evaluations of our algorithms 
on MNIST, CIFAR-10, and EuroSAT datasets. 
The results validated our theoretical analysis by demonstrating that (i)
with \emph{Submodular Trojan} algorithm, the adversary needed to embed a Trojan trigger into a very small fraction ($\approx 5\%$) of samples to achieve high accuracy on both Trojan and clean samples (e.g., $97\%$ and $85\%$ for EuroSAT), and 
(ii) the \emph{MM Trojan} algorithm yielded a trained Trojan model that would evade detection with probability one. 

A promising direction of future research is to design and develop techniques to train a Trojan model to bypass a detector that has been trained to be optimal against a different Trojan model. 
A second research question is to examine the effect on the value of $\alpha$ when samples are ranked, and when the information available to the adversary about parameters of the detector will vary.

\bibliographystyle{splncs04}      
\bibliography{GameSec2022-references}

\end{document}